\title{Noether: The More Things Change, the More Stay the Same}
\author{Grzegorz Głuch\footnote{Correspondence to Grzegorz Głuch: grzegorz.gluch@epfl.ch}\\EPFL \and Rüdiger Urbanke\\EPFL}
\date{}
\newcommand{\N}{\mathbb{N}}
\newcommand{\R}{\mathbb{R}}
\newtheorem{lemma}{Lemma}
\newtheorem{observation}{Observation}
\newtheorem{remark}{Remark}
\theoremstyle{definition}
\newtheorem{definition}{Definition}
\newtheorem{example}{Example}
\newcommand{\e}{\epsilon}
\newcommand{\x}{\pmb{x}} 
\newcommand{\xdot}{\dot{\pmb{x}}} 
\newcommand{\xddot}{\ddot{\pmb{x}}} 
\newcommand{\dx}{\pmb{\xi}} 
\newcommand{\dxdot}{\dot{\pmb{\xi}}} 
\newcommand{\w}{\pmb{w}} 
\newcommand{\wdot}{\dot{\pmb{w}}} 
\newcommand{\wddot}{\ddot{\pmb{w}}} 
\newcommand{\y}{\pmb{y}} 
\newcommand{\uu}{\pmb{u}} 
\newcommand{\alphap}{\pmb{\alpha}} 
\newcommand{\bet}{\pmb{\beta}} 
\newcommand{\Lag}{{\mathcal L}} 
\newcommand{\pot}{U} 
\newcommand{\Act}{J} 
\newcommand{\vd}{\nabla \Act} 
\newcommand{\loss}{{L}} 
\newcommand{\trace}{{\text{tr}}} 
\newcommand{\pred}{{f}}
\newcommand{\cnstdd}{{\kappa_2}}
\newcommand{\cnstd}{{\kappa_1}}
\newcounter{mycomment}
\newcommand\AddLabel[1]{%
  \refstepcounter{equation}
  (\theequation)
  \label{#1}
}
\newcolumntype{M}{>{\hfil$\displaystyle}X<{$\hfil}} 
\newcolumntype{L}{>{\collectcell\AddLabel}r<{\endcollectcell}}
\begin{document}

\maketitle

\begin{abstract}
{\em Symmetries} have proven to be important ingredients in the analysis of neural networks. So far their use has mostly been implicit or seemingly coincidental.

We undertake a systematic study of the role that symmetry plays. In particular, we clarify how symmetry {\em interacts} with the learning algorithm. The key ingredient in our study is played by Noether's celebrated theorem which, informally speaking, states that symmetry leads to {\em conserved quantities} (e.g., conservation of energy or conservation of momentum). In the realm of neural networks under gradient descent, model symmetries imply restrictions on the gradient path. E.g., we show that symmetry of activation functions leads to boundedness of weight matrices, for the specific case of linear activations it leads to balance equations of consecutive layers, data augmentation leads to gradient paths that have \say{momentum}-type restrictions, and time symmetry leads to a version of the Neural Tangent Kernel.  

Symmetry alone does not specify the optimization path, but the more symmetries are contained in the model the more restrictions are imposed on the path. Since symmetry also implies over-parametrization, this in effect implies that some part of this over-parametrization is cancelled out by the existence of the conserved quantities. 

Symmetry can therefore be thought of as one further important tool in understanding the performance of neural networks under gradient descent.
\end{abstract}


\section{Introduction}
There is a large body of work dedicated to understanding what makes neural networks (NNs) perform so well under versions of gradient descent (GD). In particular, why do they generalize without overfitting despite their significant over-parametrization. Many pieces of the puzzle have been addressed so far in the literature.  Let us give a few examples.

One recent idea is to analyze NNs via their kernel approximation. This approximation appears in the limit of infinitely-wide networks. This line of research was started in \citet{jackotntk}, where the authors introduced the Neural Tangent Kernel (NTK). Using this approach it is possible to show convergence results, prove optimality and give generalization guarantees in some learning regimes (e.g. \citet{du2018provably, chizat19lazy, zou2018stochastic, allen2018learning, arora2019fine}). These results beg the question whether the current success of NNs can be entirely understood via the theory that emerges when the network width tends to infinity. This does not seem to be the case.
E.g., it was shown in \citet{yehudai19power}, \citet{allenzhu19resnets, allenzhu20backward}, \citet{daniely20parities}, \citet{montanari1, montanari2} that there are examples where GD provably outperforms NTK and, more generally, any kernel method. \say{Outperform} here means that GD provably has a smaller loss than NTK. In \citet{srebroabbe} the authors give an even stronger result, by providing examples where NTK does not improve on random guessing but GD can learn the problem to any desired accuracy. Hence, there is more to NNs under GD than meets the eye of NTKs. 

Another idea for analyzing NNs is to use the so-called mean field method, see e.g. \citet{montanariMeanField, montanaridimensionfree, meanfieldRotskoff}. Important technical objects in this line of work are often the Wasserstein space of probability distributions and the gradient flow on this space. Similar tools were used in \citet{bachoptimaltransport} to show that for 2-layer NNs in a properly chosen limit if GD converges it converges to the global optimum.

All of the above mentioned papers use {\em trajectory-based} approaches. I.e., one analyzes a trajectory of a specific optimization algorithm. The approach we take in this paper can be classified as such. There is an alternative approach that tries to characterize geometric properties of the whole optimization landscape (see e.g. \citet{Haeffele2015, ShamirSpurious, Freeman2017TopologyAG, zhou2018critical, Maidentity, pmlr-v70-nguyen17a}). If the landscape does not contain local minima and all saddle points are strict then it is possible to guarantee convergence of GD (\citet{jordanGDdoesntconverge, jordanEscapeSaddle}). Unfortunately these properties don't hold even for shallow networks (\citet{yun2018global}). 

One of the perhaps oldest pieces of \say{wisdom} in ML is that the bias–variance trade-off curve has a \say{U}-shape -- you get a large generalization error for very small model capacities, the error then decays when you increase the capacity, but the error eventually rises again due to overfitting.
In \cite{Belkin15849} the authors argue that this \say{U}-shaped curve is in fact only the first part of a larger curve that has a \say{double descent} shape and that NNs under stochastic gradient descent (SGD) operate on the \say{new} part of this curve at the very right where the generalization error decays again. Indeed, it was well known that NNs have a very large capacity and that they can even fit random data, see \citet{zhang2016understanding}.
A closely related phenomenon was found earlier by \citet{Spiglerjamming}, who used a concept called the “jamming transition,” to study the transition from the {\em under-parametrized} to the {\em over-parametrized} regime. There is a considerable literature that has confirmed the observation by \cite{Belkin15849}, see e.g., \cite{hastie2020surprises} and the many references therein.

There has also been a considerable literature on improving generalization bounds. Since this will not play a role in our context we limit our discussion to providing a small number of references, see e.g., \cite{pmlr-v80-arora18b,DBLP:journals/corr/NeyshaburBMS17,DBLP:journals/corr/NeyshaburBMS17aa,NIPS2017_b22b257a,DBLP:journals/corr/NeyshaburTS15, NEURIPS2019_05e97c20}.

\paragraph{Symmetries.} One further theme that appears frequently in the literature is that of {\em symmetry}. In particular, we are interested in how a chosen optimization algorithm interacts with the symmetries that are present in the model.
Indeed, our curiosity was piqued by the conserved quantities that appear in the series of papers \citet{aroralinearnetworks, arora2019implicit, weihubalance} and we wanted to know where they came from.
Let us start by discussing how symmetry is connected to over-parametrization before discussing our contributions.

One can look at over-parametrization through the following lens. Let us assume that the network is specified by a vector, i.e., we think of the parameters as a vector of scalars (the weights and biases). In the sequel we simply refer to this vector as the parameters. The parameters lie in an allowable space. This is the space the learning algorithms operates in. Due to the over-parametrization the training set does not fully specify the parameters that minimize the chosen loss. 
More broadly, imagine that the space of parameters is partitioned into groups, where each group corresponds to the set of parameters that result in the same loss. Then, trivially, the loss function is {\em invariant} if we move from one element of a given group to another one. 

The groups might have a very complicated structure in general. But the structure is simple if the over-parametrization is due to {\em symmetry}. This is the case we will be interested in. More precisely, assume that the function expressed by the model itself is over-parametrized. For instance, a two-layer NN with linear activation functions computes a linear transformation of the input. It is therefore over-parametrized since we are using two matrices instead of a single one. We can thus partition the parameter space into groups according to this function. In the aforementioned case of a two-layer linear network it can be shown that
these groups contain an easily definable symmetric structure and this is frequently the case. 
As we will discuss in the detail later one, there is a set of continuous transformations of the parameters under which the prediction function does not change. 

How does a particular learning algorithm interact with this symmetric structure? A standard optimization algorithm starts at a point in the parameter space and updates the parameters in steps. When updating the parameters the algorithm moves to a new group of parameters for a new loss value. Intuitively, the bigger the group that corresponds to symmetries the harder it would appear to be to control the behavior of the learning algorithm. E.g., going back to 
the case of a two-layer NN with linear activation functions, even if the resulting linear transformation is bounded, the weights of the two component matrices might tend to zero or infinity.
We show that if the over-parametrization is due to symmetries then these symmetries lead to conserved quantities, effectively cancelling out the extra degrees of freedom -- hence the title. 

To analyze the connection between symmetries and learning algorithms we resort to a beautiful idea from physics, namely Noether's Theorem. Symmetries are ubiquitous in physics and can be considered the bedrocks on top of which physics is built. A symmetry in physics is a feature of the system that is preserved or remains unchanged under a transformation. Symmetries include translations of time or space and rotations. 
It was Emmy Noether, a German mathematician, who established in a systematic fashion how symmetries give rise to conservation laws. Informally, her famous theorem states that to every continuous symmetry corresponds a conserved quantity. This theorem proves that for instance time invariance implies conservation of energy, spatial translation invariance implies conservation of momentum and rotational invariance implies conservation of angular momentum. On a historical note, Noether's interest in this topic was peaked by a foundational question concerning the conservation of energy in the framework of Einstein's general relativity, where the symmetry/invariance is due 
to the invariance wrt to the reference frame, see \cite{rowe2019emmy}.

\paragraph{Our contribution.} 
We consider learning through the lense of Noether's Theorem -- symmetries imply conserved quantities.
This (i) allows us to unify previous results in the field and (ii) makes it clear how to obtain new such conserved quantities in a systematic fashion. In particular, we discuss three distinct ways of how symmetries emerge in the context of learning. These are (a) symmetries due to activation functions, with the special case of linear activation functions investigated separately, (b) symmetries due to data augmentation, and (c) symmetries due to the time invariance of the optimization algorithm. Let us discuss these points in more detail.

From a practical perspective, vanishing or exploding gradients are a fundamental problem when training deep NN. And from a theoretical point of view it was argued in \citet{shamirhowtoprove} that the most important barrier for proving algorithmic results is that the parameters are possibly unbounded during optimization. Therefore, any technique that can either help to keep the parameters bounded or guarantee the boundedness of the parameters a priori is of interest. 

In \citet{weihubalance} it was shown that if the network uses ReLU activation functions and we use GD then the norms of layers are balanced during training. We show how this result is a natural consequence of our general framework and discuss how it can be extended. In particular, we derive balance equations for other activation functions such as polynomial or Swish. Our framework exposes why exploding/vanishing gradients might be an inherent problem for polynomial activation functions. Finally, answering a question posed in \citet{weihubalance}, we derive balance equations for a wide class of learning algorithms, including Nesterov's Accelerated Gradient Descent.

It was shown in \citet{aroralinearnetworks} that GD converges to a global optimum for deep {\em linear} networks. This result relies crucially on a notion of balancedness of weight matrices. We show how this balance condition becomes a conserved quantity in our framework. We also show how to control the evolution of this balance condition for other learning algorithms, potentially paving the way for proving convergence for other optimizers. 

Data augmentation is a very popular method for increasing the amount of available training data. We show how data augmentation naturally leads to symmetries and we derive a corresponding conserved quantity. This quantity in turn will constrain the evolution of the optimization path in a particular way.

Symmetries play a crucial role in many of the works mentioned in the beginning of this introduction. We show how these symmetries can be seen in a unified way in our framework. For instance, we prove a key component of the Neural Tangent Kernel derivation. Our result is on the one hand weaker than the standard NTK result but the proof follows automatically from our framework and holds for more general distributions than the standard version. In \citet{montanariMeanField} it was shown that symmetries in the data distribution lead to symmetries in the weights of the network. This relates to our analysis of symmetries arising from data augmentation. In \citet{bachoptimaltransport} it was shown that if gradient descent converges it converges to the global optimum for 2-layer NN in the infinite width limit. This result crucially relied on the homogeneity symmetry that we analyze in detail. 

\section{Symmetry and Conservation Laws - Noether's Theorem}\label{sec:noether}

\subsection{The Lagrangian and the Euler-Lagrange Equations}
A fundamental idea in physics is to define the behavior of a system via its {\em Lagrangian}. 

\begin{example}[Mechanics]
Perhaps the best-known example is the Lagrangian formulation of classical mechanics,
\begin{equation} \label{equ:mechaniclagrangian}
\Lag(t, \x, \xdot)  = \frac12 m \|\xdot(t)\|^2 -\pot(\x(t)).
\end{equation}
Here, $\pmb{x}(t)$ denotes the position of a particle at time $t$. If we imagine that the particle moves in $d$-dimensional real space then $\x(t) \in \R^d$. The term $\frac12 m \|\xdot(t)\|^2$ denotes the so-called {\em kinetic} energy of this particle that is presumed to have mass $m$ and is travelling at a speed $\|\xdot(t)\|$ (the $\dot{}$ denotes the derivative with respect to time and $\|\cdot\|$ denotes the euclidean norm). The so-called {\em potential} energy is given by the term $\pot(\x(t))$, where $\pot: \R^d \rightarrow \R$ denotes the potential.
\end{example}
Given a Lagrangian $\Lag(t, \x, \xdot)$, we associate to it the functional 
\begin{align} \label{equ:action}
\Act[\x] = \int \Lag(t, \x, \xdot) dt,
\end{align}
which is called the {\em action}. This functional associates a real number to each function (path) $\x$. 
Note that the definite integral (\ref{equ:action}) is typically over a fixed time interval. The appropriate range of integration will be understood from the context. 
In what follows we will assume that all functions $\x(t)$ are members of a suitable class, e.g., the class of continuously differentiable functions $C^1$, so that all mathematical operations are well defined. But we will not dwell on this. We refer the interested reader to one of the many excellent textbooks that discuss variational calculus and Noether's theorem, such as \cite{GeF63}. Our exposition favors simplicity over mathematical rigor. The reader who is already familiar with the calculus of variation and Noether's theorem can safely skip ahead to Section~\ref{sec:extensions} which discusses the particular extension of the basic method that we will use.

The idea underlying the characterization of a system in terms of its Lagrangian is that the system will behave in such a way so as to minimize the associated action. This is called the {\em stationary action} principle. E.g., a particle will move from a given  starting position to its given ending position along that path that minimizes (\ref{equ:action}) where the Lagrangian is given by (\ref{equ:mechaniclagrangian}).

Before we consider functionals let us revisit the simpler and more familiar setting of a function $f$ that depends on several variables, i.e., $f: \R^d \xrightarrow{} \R$. We are looking for an $\x \in \R^d$ that is a minimizer of $f$. We proceed as follows.
We take $\x$ and a small {\em deviation} $d \x$. We expand the function $f(\x+d\x)$ up to linear terms in $d\x$ to get\footnote{In the sequel, if we are given a function $g(t, x(t))$ that depends on one independent variable, call it $t$, and a dependent variable, call it $x(t)$, then we write $\frac{\partial g}{\partial t}$ to denote the so-called {\em partial} derivative, whereas we write $\frac{d g}{d t}$ to denote the so-called {\em total} derivative. Recall that by the chain rule we have $\frac{d g(t, x)}{d t}= \frac{\partial g(t, x)}{\partial t} + \frac{\partial g(t, x)}{\partial x} \frac{\partial x(t)}{\partial t}$. We often use the short-hand notation $\dot{g}$ or $\dot{x}$ to denote the (total) derivative with respect to the independent variable $t$. }
\begin{align*}
    f(\x + d \x) = f(\x) + \langle \nabla f, d \x \rangle + O(\|d\x\|^2),
\end{align*}
where $\nabla f = \left(\frac{\partial f}{\partial x_1}, \cdots, \frac{\partial f}{\partial x_d} \right)^T$ is the so-called {\em gradient}. If $\x$ is a minimizer of $f(\x)$ then it must be true that the linear term $\left\langle \nabla f, d \x \right\rangle$ vanishes for any deviation $d \x$. This implies that the gradient itself must be zero, i.e.,
\begin{align} \label{equ:zerogradient}
\nabla f(\x) = 0.
\end{align}
We say that a point $\x$ that fulfills the condition (\ref{equ:zerogradient}) is an {\em extremal} or a {\em stationary} point of $f$.

Let us now look at the equivalent concept for functionals.  We are given the functional (\ref{equ:action}). We are looking for an $\x(t) \in \R^d$ that is a minimizer of $\Act$. We proceed as before. We take $\x$ and a small {\em variation} $\delta x$. In the simplest setting we require $\delta \x$ to take on the value $0$ at the two boundary points, so that $\x + \delta \x$ starts and ends at the same position as $\x$. We expand the functional $\Act[\x + \delta\x]$ up to linear terms in $\delta \x$. We claim that this has the form
\begin{align} \label{equ:expansion}
\Act[\x + \delta\x] = 
    \Act[\x] + \int \left\langle \vd, \delta \x \right\rangle dt + O \left(\int \| \delta \x\|^2 \right),
\end{align}
where $\vd$ is called the {\em variational derivative}. Note that the variational derivative plays  for functionals the same role as the gradient plays for functions of several variables. Whereas the gradient $\nabla f$ is a $d$-dimensional real-valued vector, $\vd$ is a $d$-dimensional vector whose components are real-valued functions. Further, we claim that the variational derivative can be expressed as
\begin{align} \label{equ:variationalderivative}
\vd = \frac{\partial \Lag}{\partial \x}  - \frac{d}{dt} \frac{\partial \Lag}{\partial \xdot}.
\end{align}
Before we show how to derive the expansion (\ref{equ:expansion}) and the expression (\ref{equ:variationalderivative}) let us conclude the analogy. If $\x$ is a minimizer of $\Act[\x]$ then it it must be true that the linear term 
$\int \left\langle \vd, \delta \x \right\rangle dt$ in (\ref{equ:expansion}) vanishes for any variation $\delta \x$ that vanishes at the two boundaries. This implies that
\begin{align} \label{equ:elequations}
\vd = 
0.
\end{align}
The system of equations (\ref{equ:elequations}) is known as the {\em Euler-Lagrange} (EL) equations. It is the equivalent of (\ref{equ:zerogradient}). For this reason, a function $\x(t)$ that fulfills (\ref{equ:elequations}) is called extremal or stationary, justifying the name {\em stationary action} principle mentioned above. 
\begin{example}[Mechanics Contd]
For our running example, if we compute (\ref{equ:variationalderivative}) from (\ref{equ:mechaniclagrangian}) we get
\begin{align} \label{equ:elformechanics}
\vd & =- m \ddot{\pmb{x}}(t) - \nabla \pot(\pmb{x}(t)).
\end{align}
According to (\ref{equ:elequations}), we get the EL equations $\vd = -m \ddot{\pmb{x}}(t) - \nabla \pot(\pmb{x}(t)) = 0$. This is Newton's second law: the acceleration times the mass, $m\ddot{\pmb{x}}(t)$, i.e., the change in momentum per unit time, is equal to the applied force $-\nabla \pot(\pmb{x}(t))$.
\end{example}
Let us  now get back to the expansion (\ref{equ:expansion}) and the expression for the variational derivative (\ref{equ:variationalderivative}). We follow \citet{GeF63}[p. 27]. Asssume that the integration is over the interval $\left[\underline{t}, \bar{t} \right]$. Let us divide this interval into $n+1$ evenly sized segments, $\underline{t} = t_0 < t_1 < \dots < t_{n+1} = \bar{t}$, $t_i-t_{i-1} = \Delta = (\bar{t}-\underline{t})/(n+1)$for $i=1, \dots, n+1$. We can then approximate the function $\x(t)$ by a piece-wise linear function that goes through the points $\x_i=\x(t_i)$, $i=0, \dots, n+1$. This leads to an approximation of the functional of the form
\begin{equation}\label{eq:functionalapx}
\Act[\x] \sim \sum_{i=0}^{n} \Lag(t_i, \x_i, (\x_{i+1}-\x_i)/\Delta) \Delta.
\end{equation}
If we assume that the two end points of $\x$ are fixed, and hence $\x_0$ and $\x_{n+1}$ are fixed, then this is a function of the $n$ $d$-dimensional vectors $\x_1, \cdots, \x_n$. 

For simplicity of presentation assume that $x(t) \in \R$, i.e., $d=1$. In this case we are back to the familiar setting of minimizing a function $\Act$ of $n$ variables, $\Act=\Act[x_1, \dots, x_n]$. We know that in this setting we need to compute the gradient $\nabla \Act
= \left(\frac{\partial \Act}{\partial x_1}, \dots, \frac{\partial \Act}{\partial x_n} \right)$. Each $x_k$ appears in two terms in \eqref{eq:functionalapx}, corresponding to $i = k$ and $i = k-1$. Thus we get:
$$\frac{\partial \Act}{\partial x_k} = \frac{\partial \Lag}{\partial x} \left(t_k,x_k, \frac{x_{k+1} - x_k}{\Delta} \right)\Delta - \frac{\partial \Lag}{\partial \dot{x}} \left(t_k,x_k, \frac{x_{k+1} - x_k}{\Delta} \right) + \frac{\partial \Lag}{\partial \dot{x}} \left(t_{k-1},x_{k-1}, \frac{x_{k} - x_{k-1}}{\Delta} \right) \text{.}$$
Dividing both sides of the above equation by $\Delta$ we get:
\begin{equation}\label{eq:derivativeapx}
\frac{\partial \Act}{\partial x_k \Delta} = \frac{\partial \Lag}{\partial x} \left(t_k,x_k, \frac{x_{k+1} - x_k}{\Delta} \right) - \frac{1}{\Delta} \left[ \frac{\partial \Lag}{\partial \dot{x}} \left(t_k,x_k, \frac{x_{k+1} - x_k}{\Delta} \right) - \frac{\partial \Lag}{\partial \dot{x}} \left(t_{k-1},x_{k-1}, \frac{x_{k} - x_{k-1}}{\Delta} \right) \right] \text{.}
\end{equation}
The product $\partial x_k \Delta$ that appears in the denominator on the left-hand side of 
(\ref{eq:derivativeapx}) has a geometric meaning - it is an area. What does this area correspond to? Recall that we compute by how much the functional changes if we apply a small variation. Assume that this variation is zero except around the position $x_k$ where the variation takes on the value $d x_k$ for a \say{length} of $\Delta$. In other words, we imagine that the variation $\delta x$ is the zero function except for a small triangular \say{bump} of area $\partial x_k \Delta$ around the position $x_k$. (This is the equivalent to a vector $d \x$ that is zero except for component $k$ that takes on the value $d \x_k$ in the setting of minimizing a function of $d$ variables.) The right-hand side of (\ref{eq:derivativeapx}) tells us by how much the functional changes due to this variation. Taking the limit of the expression \eqref{eq:derivativeapx} when $\Delta \xrightarrow{} 0$ we get:
$$
\vd = \frac{\partial \Lag}{\partial x} - \frac{d}{dt} \frac{\partial \Lag}{\partial \dot{x}},
$$
which is exactly the advertised variational derivative \eqref{equ:variationalderivative} for the case $d=1$. If $d>1$, then $\vd$ is a $d$-dimensional vector of functions as is the variation $\delta \x$, leading to the integral of the inner product between these two quantities as shown in (\ref{equ:elequations}).
\subsection{Invariances and Noether's Theorem}
One advantage of formulating the system behavior in terms of a Lagrangian is that in this framework it is easy to see how {\em symmetries/invariances} (e.g., time or space translations) give rise to {\em conservation} laws (e.g., conservation of energy or momentum). This is the celebrated Noether's theorem. We start by looking at two specific examples.
\begin{example}[Mechanics Contd -- Time Invariance -- Conservation of Energy]
Note that the Lagrangian (\ref{equ:mechaniclagrangian}) does not explicitly depend on time $t$. I.e., it is invariant to translations in time. An explict calculation shows that
\begin{align*}
\frac{d}{dt} \left(\frac12 m \|\xdot(t)\|^2 + \pot(\x(t)) \right)
& = m \langle \xdot(t), \xddot(t) \rangle + \langle \nabla \pot(\x(t)), \xdot(t) \rangle 
=\langle m \xddot(t)+\nabla \pot(\x(t)), \xdot(t) \rangle 
 \stackrel{(\ref{equ:elformechanics})}{=} -\langle \vd, \xdot \rangle 
  = 0.
\end{align*}
Note that in the last step we have used the trivial observation that if $\vd=0$ then $\langle \vd, \xdot \rangle 
  = 0$.
In words, the quantity $\frac12 m \|\xdot(t)\|^2 + \pot(\x(t))$ stays conserved for an extremal path. But note that this conserved quantity is the sum of the kinetic and potential energy.
\end{example}

\begin{example}[Mechanics Contd -- Spatial Invariance -- Conservation of Momentum]\label{ex:spatialsymmetry}
In our running example assume that $\x \in \R^3$ ($x$, $y$, and $z$ components) and that the potential $\pot(\x)$ is constant along the $x$ and $y$ component. Assume that the function $\x(t)$ is extremal, i.e., $\x(t)$ is a solution of the EL equations $\vd=0$.

Let $\delta \x$ be any variation. Due to the extremality of $\x(t)$, $\langle \vd, \delta \x \rangle=0$. Now specialize to
$\delta \x(t) = ( dx, dy, 0)^T$ 
and note that $\pot$ is invariant in the $x$ and $y$ direction
and hence $(\nabla \pot)_x = (\nabla \pot)_y=0$. Then we get $0=-\langle \vd, \delta \x \rangle  = \langle m \ddot{\x} + \nabla \pot(\x), (dx, dy, 0)^T \rangle = m(\xddot_x(t) dx + \xddot_y(t) dy)$, where we have used (\ref{equ:elformechanics}). 

We conclude that $m \xddot_x(t) =0$ and $m \xddot_y(t) =0$. In other words, $\frac{d}{dt} m \xdot_x(t)=0$ and $\frac{d}{dt}m \xdot_y(t) = 0$, i.e., the moments in the $x$ and $y$ directions, are conserved.
\end{example}

Before we continue to the general case let us describe how we will specify the transformations that keep the Lagrangian invariant. It turns out that the proper setting are continuous transformations that can be described by their so-called {\em generators}. We follow \cite{Neu11}[Chapter 5].
\begin{definition}[Generators and Invariance] \label{def:generatorsandinvariance}
Let $\Lag(t, \x, \xdot)$ be a Lagrangian and consider the following transformation
\begin{align*}
t \mapsto t' = T(t, \x, \epsilon), \\
\x \mapsto \x' = X(t, \x, \epsilon),
\end{align*}
where for $\epsilon=0$ the transformation is the identity and the transformation is smooth as a function of $\epsilon$. Let
\begin{align} 
T(t, \x, \epsilon) & = t + \tau(t, \x) \epsilon + O\left(\epsilon^2 \right), \nonumber \\
X(t, \x, \epsilon) & = \x + \xi(t, \x) \epsilon + O\left(\epsilon^2\right). \label{equ:generatorsspace}
\end{align}
The terms $\tau(t, \x)$ and $\xi(t, \x)$ are called the {\em generators} of the transformation. We say that the generators leave the Lagrangian invariant if 
\begin{align*}
\Lag(t', \x', \xdot') \frac{dt'}{dt}- \Lag(t, \x, \xdot) = O \left(\epsilon^2 \right).
\end{align*}
\end{definition}
The EL equations (\ref{equ:elequations}) give us condition to be at a stationary point, i.e., a condition that no variation will allow us to decrease or increase the functionally locally up to linear terms. In a similar manner we can derive a condition for a specific variation, namely the one given by the transformation not to change the value of the functional to first order. For a proof of the following lemma we refer the reader to \cite{Neu11}[Chapter 5].
\begin{lemma}[Rund-Trautmann Identity] \label{lem:rundtrautmann}
Let $\Lag(t, \x, \xdot)$ be a Lagrangian and 
consider a transformation with generators $\tau(t, \x)$ and $\xi(t, \x)$. 
If
\begin{align} \label{equ:rundtrautmann}
\left\langle \frac{\partial \Lag}{\partial \x}, \dx \right\rangle +
\left\langle \frac{\partial \Lag}{\partial \xdot}, \dxdot \right\rangle +
\frac{\partial \Lag}{\partial t} \tau +
\left(\Lag - \left\langle \xdot, \frac{\partial \Lag}{\partial \xdot} \right\rangle \right) \dot{\tau}  = 0
\end{align}
then the generators leave the Lagrangian invariant.
\end{lemma}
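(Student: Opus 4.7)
The plan is a direct first-order Taylor expansion of the invariance condition $\Lag(t',\x',\xdot')\,\frac{dt'}{dt} - \Lag(t,\x,\xdot) = O(\epsilon^2)$ around $\epsilon=0$, and then to match the resulting coefficient of $\epsilon$ with the expression on the left-hand side of the Rund--Trautmann identity \eqref{equ:rundtrautmann}. Since at $\epsilon=0$ the transformation is the identity, the zeroth-order term automatically cancels, so the only thing to check is that the coefficient of $\epsilon^1$ vanishes. Setting that coefficient to zero will produce exactly \eqref{equ:rundtrautmann}, so showing that \eqref{equ:rundtrautmann} is equivalent to first-order invariance is equivalent to showing the implication claimed.

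First I would compute the three first-order ingredients of the transformation. From \eqref{equ:generatorsspace} we have $t' = t + \tau\epsilon + O(\epsilon^2)$ and $\x' = \x + \dx\epsilon + O(\epsilon^2)$, so differentiating in $t$ gives $\frac{dt'}{dt} = 1 + \dot{\tau}\epsilon + O(\epsilon^2)$ where $\dot{\tau}$ is the total derivative $\frac{d\tau(t,\x(t))}{dt}$. The one slightly subtle quantity is the transformed velocity
\[
\xdot' \;=\; \frac{d\x'}{dt'} \;=\; \frac{d\x'/dt}{dt'/dt} \;=\; \frac{\xdot + \dxdot\epsilon + O(\epsilon^2)}{1 + \dot{\tau}\epsilon + O(\epsilon^2)} \;=\; \xdot + (\dxdot - \xdot\,\dot{\tau})\,\epsilon + O(\epsilon^2),
\]
which is where the change of independent variable enters. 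This is the single step I expect to be the main place a careless calculation would err.

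Next I would Taylor-expand $\Lag(t',\x',\xdot')$ about $(t,\x,\xdot)$ using these three expansions:
\[
\Lag(t',\x',\xdot') = \Lag(t,\x,\xdot) + \left[\frac{\partial \Lag}{\partial t}\tau + \left\langle \frac{\partial \Lag}{\partial \x},\dx\right\rangle + \left\langle \frac{\partial \Lag}{\partial \xdot},\dxdot - \xdot\,\dot{\tau}\right\rangle\right]\epsilon + O(\epsilon^2).
\]
Multiplying by $\frac{dt'}{dt} = 1 + \dot{\tau}\epsilon + O(\epsilon^2)$ and keeping terms up to $\epsilon^1$ (the product $\Lag(t,\x,\xdot)\cdot\dot{\tau}$ contributes to the first-order part) yields
\[
\Lag(t',\x',\xdot')\frac{dt'}{dt} - \Lag(t,\x,\xdot) \;=\; \Bigl[\,\text{LHS of \eqref{equ:rundtrautmann}}\,\Bigr]\,\epsilon + O(\epsilon^2),
\]
after regrouping the $-\langle \partial\Lag/\partial\xdot,\xdot\rangle\,\dot{\tau}$ contribution from the velocity expansion with the $\Lag\,\dot{\tau}$ term coming from the Jacobian into the combined factor $(\Lag - \langle\xdot,\partial\Lag/\partial\xdot\rangle)\dot{\tau}$.

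Finally I would conclude: if \eqref{equ:rundtrautmann} holds, the coefficient of $\epsilon$ is identically zero, so $\Lag(t',\x',\xdot')\frac{dt'}{dt} - \Lag(t,\x,\xdot) = O(\epsilon^2)$, which is the definition of invariance in Definition~\ref{def:generatorsandinvariance}. The main substantive content of the proof is the velocity transformation above; everything else is bookkeeping of a first-order Taylor expansion.
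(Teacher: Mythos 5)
Your proposal is correct. The paper does not actually prove this lemma --- it defers to \citet{Neu11}[Chapter 5] --- and your argument is essentially the standard derivation given there: expand $t'$, $\x'$, and in particular $\xdot' = (d\x'/dt)/(dt'/dt) = \xdot + (\dxdot - \xdot\,\dot{\tau})\,\epsilon + O(\epsilon^2)$ to first order, Taylor-expand $\Lag$, multiply by the Jacobian $1+\dot{\tau}\epsilon$, and observe that the coefficient of $\epsilon$ is exactly the left-hand side of \eqref{equ:rundtrautmann}. You correctly identify and handle the one genuinely delicate step, the transformation of the velocity under the change of time variable, which is where the $-\langle \xdot, \partial\Lag/\partial\xdot\rangle\,\dot{\tau}$ term originates.
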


\begin{lemma}[Noether's Conservation Law] \label{lem:noether}
Let $\Lag(t, \x, \xdot)$ be a Lagrangian that is invariant under the generators $\tau(t, \x)$ and $\dx(t, \x)$. Let $\x(t)$ be an extremal function of $\Act[\x] = \int \Lag(t, \x, \xdot) dt$. Then 
\begin{align} \label{equ:noether}
\left\langle  \frac{\partial \Lag}{\partial \xdot}, \dx \right\rangle + \left( \Lag - \left\langle \xdot, \frac{\partial \Lag}{\partial \xdot} \right\rangle \right) \tau 
\end{align}
is conserved along $\x(t)$.
\end{lemma}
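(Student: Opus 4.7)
The plan is to show directly that the total time derivative of the candidate conserved quantity vanishes along any extremal path, by combining the Euler-Lagrange equations with the Rund-Trautmann identity (\ref{equ:rundtrautmann}).

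First I would differentiate (\ref{equ:noether}) with respect to $t$ using the product rule. This yields six terms: two from the inner product $\langle \partial \Lag / \partial \xdot, \dx \rangle$, namely $\langle \frac{d}{dt} \frac{\partial \Lag}{\partial \xdot}, \dx \rangle + \langle \frac{\partial \Lag}{\partial \xdot}, \dxdot \rangle$, and four from differentiating $(\Lag - \langle \xdot, \frac{\partial \Lag}{\partial \xdot}\rangle)\tau$, namely $\frac{d\Lag}{dt}\tau - \langle \xddot, \frac{\partial \Lag}{\partial \xdot}\rangle \tau - \langle \xdot, \frac{d}{dt}\frac{\partial \Lag}{\partial \xdot}\rangle \tau + (\Lag - \langle \xdot, \frac{\partial \Lag}{\partial \xdot}\rangle)\dot{\tau}$.

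Next I would apply the Euler-Lagrange equations $\frac{d}{dt}\frac{\partial \Lag}{\partial \xdot} = \frac{\partial \Lag}{\partial \x}$, which are available because $\x(t)$ is extremal. This turns the first term into $\langle \frac{\partial \Lag}{\partial \x}, \dx\rangle$ and the fifth into $-\langle \xdot, \frac{\partial \Lag}{\partial \x}\rangle \tau$. The chain rule for the total derivative, $\frac{d\Lag}{dt} = \frac{\partial \Lag}{\partial t} + \langle \frac{\partial \Lag}{\partial \x}, \xdot\rangle + \langle \frac{\partial \Lag}{\partial \xdot}, \xddot\rangle$, then allows a clean cancellation: the $\frac{\partial \Lag}{\partial t}\tau$ piece is all that survives from $\frac{d\Lag}{dt}\tau$ after absorbing $-\langle \xddot, \frac{\partial \Lag}{\partial \xdot}\rangle \tau$ and $-\langle \xdot, \frac{\partial \Lag}{\partial \x}\rangle \tau$. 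Collecting the survivors, the total derivative of (\ref{equ:noether}) equals
\begin{align*}
\left\langle \frac{\partial \Lag}{\partial \x}, \dx \right\rangle + \left\langle \frac{\partial \Lag}{\partial \xdot}, \dxdot \right\rangle + \frac{\partial \Lag}{\partial t}\tau + \left(\Lag - \left\langle \xdot, \frac{\partial \Lag}{\partial \xdot}\right\rangle \right) \dot{\tau},
\end{align*}
which is precisely the left-hand side of the Rund-Trautmann identity (\ref{equ:rundtrautmann}). Since by hypothesis the generators $\tau$ and $\dx$ leave $\Lag$ invariant, Lemma~\ref{lem:rundtrautmann} tells us this quantity is zero, so (\ref{equ:noether}) is conserved.

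The calculation itself is essentially bookkeeping; the only place one has to be attentive is keeping track of which derivatives are partial versus total. The ``hard part'' is conceptually isolated in two inputs that we are allowed to cite: extremality gives the Euler-Lagrange equations used to rewrite $\frac{d}{dt}\frac{\partial \Lag}{\partial \xdot}$, and invariance of the Lagrangian under the generators gives (via Lemma~\ref{lem:rundtrautmann}) the vanishing of the assembled expression. Everything else is the product rule and the chain rule for $\frac{d\Lag}{dt}$ along the extremal path.
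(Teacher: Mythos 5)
Your proof is correct and is essentially the paper's argument run in reverse: the paper starts from the Rund--Trautmann expression, substitutes the Euler--Lagrange identity and the identity $\frac{\partial \Lag}{\partial t} = \frac{d}{dt}\left[\Lag - \left\langle \xdot, \frac{\partial \Lag}{\partial \xdot}\right\rangle\right]$, and regroups the result as a total derivative, whereas you differentiate the candidate quantity and show it collapses to the Rund--Trautmann expression --- the same cancellations in the opposite order. One small remark: you (exactly like the paper's own proof) invoke the converse of Lemma~\ref{lem:rundtrautmann} as it is literally stated --- you need ``invariance implies the identity vanishes,'' while the lemma is phrased as the reverse implication --- but this is a quirk of the paper's phrasing rather than a gap in your argument.
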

\begin{proof} By assumption $\x(t)$ is an extremal function of $\Act[\x]$ and hence fulfills the EL equations (\ref{equ:elequations}) with $\nabla \Act$ given by (\ref{equ:variationalderivative}). A fortiori, for any generator $\dx$ we therefore have
\begin{align} \label{equ:eliform}
\left\langle \frac{\partial \Lag}{\partial \x}, \dx \right\rangle = 
\left\langle \frac{d}{dt} \frac{\partial \Lag}{\partial \xdot}, \dx \right\rangle.
\end{align}
Further, expand $\frac{d \Lag}{d t}$ and use the extremality to get
\begin{align*}
 \frac{d \Lag}{d t} & = \frac{\partial \Lag}{\partial t} + \left\langle \frac{\partial \Lag}{\partial \x}, \xdot \right\rangle + \left\langle \frac{\partial \Lag}{\partial \xdot}, \xddot \right\rangle 
= \frac{\partial \Lag}{\partial t} + \left\langle \frac{d}{dt} \frac{\partial \Lag}{\partial \xdot}, \xdot \right\rangle + \left\langle \frac{\partial \Lag}{\partial \xdot}, \xddot \right\rangle  
= \frac{\partial \Lag}{\partial t} + \frac{d}{d t} \left\langle \frac{\partial \Lag}{\partial \xdot}, \xdot \right\rangle.
\end{align*}
This can be written as 
\begin{align} \label{equ:eliiform}
\frac{\partial \Lag}{\partial t} = \frac{d}{dt} \left[\Lag - \left\langle \xdot, \frac{\partial \Lag}{\partial \xdot} \right\rangle \right].
\end{align}
By assumption the generators $\tau(t, \x)$ and $\dx(t, \x)$ keep $\Act[\x(t)]$ invariant. Note that the left-hand expressions in (\ref{equ:eliform}) and (\ref{equ:eliiform}) appear in the Rund-Trautmann conditions stated in Lemma~\ref{lem:rundtrautmann}. If in the Rund-Trautmann condition we replace those left-hand expressions by their equivalent right-hand expressions we get
\begin{align*}
0 & = \left\langle \frac{\partial \Lag}{\partial \x}, \dx \right\rangle +
\left\langle \frac{\partial \Lag}{\partial \xdot}, \dxdot \right\rangle +
\frac{\partial \Lag}{\partial t} \tau +
\left( \Lag - \left\langle \xdot, \frac{\partial \Lag}{\partial \xdot} \right\rangle \right) \dot{\tau} \\
& = \left\langle \frac{d}{dt} \frac{\partial \Lag}{\partial \xdot}, \dx \right\rangle +
\left\langle \frac{\partial \Lag}{\partial \xdot}, \dxdot \right\rangle +
\frac{d}{dt} \left[ \Lag - \left\langle \xdot, \frac{\partial \Lag}{\partial \xdot} \right\rangle \right] \tau +
\left(\Lag - \left\langle \xdot, \frac{\partial \Lag}{\partial \xdot} \right\rangle\right) \dot{\tau} \\
& =  \frac{d}{dt} \left[\left\langle  \frac{\partial \Lag}{\partial \xdot}, \dx \right\rangle\right] +
\frac{d}{dt} \left[\left( \Lag - \left\langle \xdot, \frac{\partial \Lag}{\partial \xdot} \right\rangle \right) \tau \right] \\
& =  \frac{d}{dt} \left[\left\langle  \frac{\partial \Lag}{\partial \xdot}, \dx \right\rangle+\left( \Lag - \left\langle \xdot, \frac{\partial \Lag}{\partial \xdot} \right\rangle \right) \tau \right]
\end{align*}
as promised. 
\end{proof}

Note the following two special cases. If $\tau(t, \x) = 0$ then the invariant quantity is $\left\langle  \frac{\partial \Lag}{\partial \xdot}, \dx \right\rangle$. If further $\dx(t, \x) = \dx$, a constant, then each of the components of $\frac{\partial \Lag}{\partial \xdot}$ for which $\dx$ is not zero is invariant by itself.  If $\xi(t, \x) = 0$ then the invariant quantity is $\left( \Lag - \left\langle \xdot, \frac{\partial \Lag}{\partial \xdot} \right\rangle \right) \tau $.  If further $\tau(t, \x) = \tau$, a constant, then the invariant quantity is $\Lag - \left\langle \xdot, \frac{\partial \Lag}{\partial \xdot} \right\rangle$. This last expression is called the {\em Hamiltonian}.

\begin{example}[Mechanics Contd]\label{exp:hamiltonian}
As we previously discussed, in our running example the Lagrangian does not explicitly depend on time $t$. Hence, by our previous discussion the Hamiltonian is a conserved quantity. We have
\begin{align*}
 H = \Lag - \left\langle \xdot, \frac{\partial \Lag}{\partial \xdot} \right\rangle  = \frac12 m \|\xdot(t)\|^2 -\pot(\x(t)) - m \langle \xdot, \xdot \rangle = - \left(\frac12 m \|\xdot(t)\|^2 +\pot(\x(t)) \right),
\end{align*}
confirming our previous result that the sum of the kinetic and potential energy stays preserved.
\end{example}

\subsection{Extensions}\label{sec:extensions}
We have seen in the sections above how we can find conserved quantities for systems that are described by a Lagrangian that has an invariance.

For our applications we need to relax some of the assumptions. First, some important examples cannot be described in terms of a Lagrangian directly. In our case this applies e.g., to the case of Gradient Flow, see Section~\ref{sec:learning}. As we will discuss in more detail in Section~\ref{sec:learning}, one way to circumvent this problem is to represent Gradient Flow as a limiting case of a dynamics that does have an associated Lagrangian. We describe a second, more direct approach, here. Second, even if dynamics can be described by a Lagrangian, the symmetries that are important for us might only keep part of the Lagrangian invariant. We will now discuss how in such situations we can nevertheless get useful information by applying a procedure that is very close in spirit to the one employed by Noether.

Let us quickly review how Noether's theorem was derived. We started with an action $\Act$ defined by a Lagrangian $\Lag$. The requirement that a path $\x$ was extremal led to the EL equations (\ref{equ:elequations}). Further, if the Lagrangian exhibited an invariance wrt to some generators $(\tau, \dx)$ then this lead to the Rund-Trautmann equations (\ref{equ:rundtrautmann}). The last step consisted in combining these two sets of equations and to realize that the result can be written as a total derivative. This gave rise to the set of conserved quantities stated in Lemma~\ref{lem:noether}.

For us the starting point will be a differential equation describing the continuous limit of a discrete gradient-like optimization algorithm. Let us write it in the generic form
\begin{align} \label{equ:extensioneulerlagrange}
E(t, \x, \xdot, \xddot) = - \nabla_{\x} \pot(\x).
\end{align}
As we will discuss in much more detail in Section~\ref{sec:learning}, for Gradient Flow the term $E$ has the simple form $E(t, \x, \xdot, \xddot) = \xdot$, whereas for other dynamics it might be a function of $\xddot$ or involve a combination of terms, including factors of $t$. The vector $\x$ represents the set of parameters of our problem. The term $\pot(\x)$ represents the loss of the network. It is a function of the network parameters. For some cases we consider we will have a Lagrangian but e.g. for Gradient Flow there is no Lagrangian for which (\ref{equ:extensioneulerlagrange}) is the Euler-Lagrange equation (although, as we will discuss later, we can think of Gradient Flow as a limiting case for which a Lagrangian exists). Nevertheless, we will think of (\ref{equ:extensioneulerlagrange}) as our Euler-Lagrange equation. This takes care of the first ingredient in our program.

For our applications the invariance will typically only apply to the potential $\pot(\x)$. I.e., there will be a generator $\dx(\x)$ (typically only dependent on the $\x$ but not on $t$ directly) so that $\pot(\x)$ is invariant. Applying the Rund-Trautmann conditions (\ref{equ:rundtrautmann}) for this part of the \say{Lagrangian} we get the condition
\begin{align} \label{equ:extensionrundtrautmann}
\langle \nabla_{\x} \pot, \dx \rangle = 0.
\end{align}
Multiplying both sides of (\ref{equ:extensioneulerlagrange}) by $\dx$ and combining with (\ref{equ:extensionrundtrautmann}) we get the set of equations
\begin{align} \label{equ:extensionconserved}
   \langle E(t, \x, \xdot, \xddot), \dx(\x) \rangle = 0.
\end{align}
In general this expression can not be written as a total derivative and hence we do not get what is typically called a conserved quantity as before. But we can  think of (\ref{equ:extensionconserved}) as a \say{conserved} quantity and often this equation gives rise to interesting bounds on the parameters.

\subsection{From Invariances to Generators - Lie Groups and Lie Algebras}\label{sec:lie}

As described in Section~\ref{sec:extensions}, the differential equations we are interested in are mostly of the form \eqref{equ:extensioneulerlagrange}. Hence the recipe to find 
\say{conserved} quantities is as follows: (i) find a transformation that leaves $\pot(\x)$ invariant; (ii) find the generator $\dx$ associated with this transformation; (iii) evaluate \eqref{equ:extensionconserved} and draw your conclusions. 

In Appendix~\ref{apd:lie} we explain step (ii) in more detail, i.e., how, given a transformation, we find its corresponding generators.
This can always be done in a pedestrian way, expanding (\ref{equ:generatorsspace}) by \say{hand}. But there exists a well-studied area in mathematics (Lie Groups and Lie Algebras) that deals exactly with this issue and so it is worth pointing out the connections.

We will not make  use of the notation and language introduced in Appendix~\ref{apd:lie} in the main part of the paper. The reader can therefore safely skip Appendix~\ref{apd:lie} on a first reading.


\section{Learning Setup And Optimization Algorithms} \label{sec:learning}
We are given a training set $\left\{\left(\x^{(i)}, \y^{(i)}\right)\right\}_{i = 1}^n \subset \R^{d_x} \times \R^{d_y}$, and want to learn a hypothesis from a parametric family $\mathcal{H} := \left\{\pred_{\w} : \R^{d_x} \xrightarrow{} \R^{d_y} \ \middle| \ \w \in \mathcal{W} \right\}$ by minimizing the empirical loss function
\begin{equation}\label{eq:objective}
\w^* = \text{argmin}_{\w \in \mathcal{W}} \ \loss(\w) \text{,}
\end{equation}
where $\loss$ implicitly depends on the training set. Although the basic idea applies to any parametric family, we will limit ourselves to the case where the elements of $\mathcal{H}$ are represented by NNs with layers numbered from $0$ (input) to $K$ (output), containing $d_x = d_0, d_1, \dots$, and $d_K = d_y$ neurons respectively. The activation functions for the layers $1$ to $K$ are presumed to be $\sigma_1, \dots, \sigma_K : \R \xrightarrow{} \R$. The weight matrices will be denoted by $W^{(1)}, W^{(2)}, \dots, W^{(K)}$, respectively, where matrix  $W^{(k)}$ connects layer $k-1$ to layer $k$. We define 
\begin{equation}\label{eq:defoff}
\pred_{\w}(\x) := \sigma_K \left(W^{(K)} \sigma_{K-1} \left( W^{(K-1)} \cdots \sigma_{1} \left(W^{(1)} \x \right) \right) \right) .
\end{equation}
The dimension of $\w$ is $m = \sum_{i=0}^{K-1} d_i d_{i+1}$. Note that \eqref{eq:defoff} is a network without bias terms. In Section~\ref{sec:biases} we consider networks with bias terms. In this case the network becomes $\pred_{\w}(\x) := \sigma_K (b^{(K)} + W^{(K)} \sigma_{K-1}(b^{(K-1)} + W^{(K-1)} \dots \sigma_{1}(b^{(1)} + W^{(1)} \x )))$. For $n_1,n_2 \in \N$ we denote by $\text{Mat}(n_1,n_2)$ the set of $n_1 \times n_2$ matrices with real entries. We will also use the following notation $\text{Mat}(n) := \text{Mat}(n,n)$. For $n \in \N$ we denote by $\text{SO}(n)$ the special orthogonal group in dimension $n$ and by $\mathfrak{so}(n)$ the algebra of $n \times n$ skew-symmetric matrices.

\paragraph{Continuous vs Discrete.} We are mainly interested in the following three optimization algorithms for minimizing \eqref{eq:objective}:
Newtonian Mechanics (ND), Nesterov's Accelerated Gradient Descent (NAGD) and Gradient Descent (GD). More precisely, we will analyse the continuous-time versions of these algorithms. The connection between a discrete-time optimization algorithm and its corresponding continuous-time version given by its associated \textit{ordinary differential equation} (ODE) has been the topic of a considerable literature. Note that GD becomes Gradient Flow (GF) when we consider the limit of the learning rate going to zero. The analysis of how NAGD becomes Nesterov's Accelerated Gradient Flow (NAGF) is more involved and was discussed in \citet{candesNesterov}. The ND algorithm is not a popular optimization algorithm but it will be helpful to analyze it as it demonstrates our results well and can be understood as a special case of the NAGF. 

Let us briefly explore the connections between continuous time dynamics and their respective discrete time implementations for NAGF and ND. We follow \citet{candesNesterov}. NAGF can take the following form: starting with $\w_0$ and $\y_0 = \w_0$ define
\begin{align}
\w_k &= \y_{k-1} - \eta \nabla \loss(\y_{k-1}), \nonumber \\
\y_k &= \w_k + \frac{k-1}{k+2} (\w_k - \w_{k-1}). \label{eq:nesterovdiscrete}
\end{align}
In the limit of vanishing step sizes this is equivalent to the following ODE
$$
\wddot + \frac{3}{t} \wdot + \nabla \loss(\w) = 0,
$$
where $t \approx k \sqrt{\eta}$ and $\w(t \sqrt{\eta}) \approx \w_k$. 

ND corresponds to 
$$
\wddot + \nabla \loss(\w) =0.
$$
A discrete version can be written as
$$
\w_{k+2} = 2\w_{k+1} - \w_{k} - \eta \nabla \loss(\w_{k+1}),
$$
where, as for NAGF,  $t \approx k \sqrt{\eta}$. The above equation can be rewritten in a more familiar form
\begin{equation}
\w_{k+2} = \w_{k+1} + (\w_{k+1} - \w_{k}) - \eta \nabla \loss(\w_{k+1}),
\end{equation}
where we can interpret $(\w_{k+1} - \w_{k})$ as a momentum term without any dampening.

\paragraph{Lagrangians.} 
Rather than analyzing these three continuous-time versions separately, it is more convenient to consider a {\em class} of continuous-time dynamics.
The general form of ODEs we consider is
\begin{equation} \label{equ:ode}
\cnstdd(t) \wddot + \cnstd(t) \wdot + \nabla \loss(\w) = 0.   
\end{equation}
The corresponding Lagrangian is
\begin{equation} \label{equ:lagrangian}
e^{\int_{0}^{t} \frac{\cnstd(\tau)}{\cnstdd(\tau)} d\tau} \left(\frac{1}{2} \| \wdot \|^2 - \frac{1}{\cnstdd(t)} \loss(\w) \right).  
\end{equation}
The relationship between (\ref{equ:lagrangian}) and (\ref{equ:ode}) is quickly established by checking that the variational derivative
(\ref{equ:variationalderivative}) corresponding to (\ref{equ:lagrangian}) is indeed equal to the left-hand side of (\ref{equ:ode}).

A few remarks are in order. First, we recover ND by setting $(\cnstdd(t)= 1, \cnstd(t)=0)$ and NAGF by using $(\cnstdd(t)= 1, \cnstd(t)=3/t)$. No fixed set of parameters corresponds to GF but one can interpret GF as the dynamics for $(\cnstdd(t)= \kappa, \cnstd(t)=1)$ when $\kappa$ tends to $0$. The \say{physics} interpretation of GF is that it is the \say{massless} (strong friction) limit where $\kappa$ is interpreted as mass for the following damped Lagrangian $e^{t/\kappa}\left( \frac{\kappa}{2}\|\wdot\|^2 - \loss(\w)\right)$ (as explained in \citet{villani2008optimal}[p. 646]). If the reader is looking for a gentle introduction and connections to other aspects of optimization we recommend searching for a blog by Andre Wibisono. 
This general view-point will allow us to treat all three cases in a uniform manner. Second, in principle we could have included terms of higher order in our dynamics. Our basic framework would easily extend to such a case. But since the resulting dynamics have not attracted much attention to this point we opted to stick to the less general setting. Table~\ref{tab:dynamicsandlagrangians} summarizes the situation.

\begin{table}
\centering
\begin{tabularx}\textwidth{@{}lMLc@{}}
    \toprule
    Algorithm & ODE & \multicolumn{1}{l}{} & Lagrangian \\ 
    \midrule
\rule{0pt}{1ex}    
General Dynamics  & \cnstdd(t) \wddot + \cnstd(t) \wdot + \nabla \loss(\w) = 0 & eq:odeGeneral & $e^{\int_{0}^{t} \frac{\cnstd(\tau)}{\cnstdd(\tau)} d\tau} \left(\frac{1}{2} \| \wdot \|^2 - \frac{1}{\cnstdd(t)} \loss(\w) \right)$ \\ 
\rule{0pt}{3ex}
Newtonian Dynamics  & \wddot + \nabla \loss(\w) = 0 & eq:odeND & $\frac{1}{2} \| \wdot \|^2 - \loss(\w)$ \\ 
\rule{0pt}{3ex}
Nesterov's Accelerated GF & \wddot + \frac{3}{t} \wdot + \nabla \loss(\w) = 0 & eq:odeNAGF & $t^3 \left(\frac{1}{2} \| \wdot \|^2 - \loss(\w) \right)$  \\
\rule{0pt}{3ex}
Gradient Flow & \wdot + \nabla \loss(\w) = 0 & eq:odeGF &  \\
    \bottomrule
\end{tabularx}
\caption{\label{tab:dynamicsandlagrangians} The general form of the ODE and it's Lagrangian as well as the three special cases of interest.}
\end{table}

\section{Homogeneity of Activation Leads To Bounded Weights}
It is now time to look at concrete instances of our general framework -- how symmetries/invariances give rise to conserved quantities.  We look at conserved quantities due to properties of the activation functions, the special case of linear networks, invariances due to data symmetry, and time invariance. We treat each of these cases in a separate section. In each section we follow the same structure. We (i) identify the symmetry/invariance, (ii) find the corresponding generators, (iii) deduce from the generators the conserved quantities, and (iv) discuss the implications.

We start by looking at symmetries due to special properties of the activation function. In particular, we consider the following activation functions:
\begin{itemize}
    \item ReLU: $x \mapsto \max(0,x)$,
    \item For $\alpha < 1$, LeakyReLU($\alpha$): $x \mapsto         \begin{cases}
        x &\text{if $ x > 0$}\\
        \alpha x &\text{if $ x \leq 0$}
        \end{cases} $, 
    \item For $p \in \R_+$, Polynomial: $x \mapsto x^p$,
    \item For $p \in \R_+$, Rectified Polynomial Unit (RePU($p$)): $x \mapsto         \begin{cases}
        x^p &\text{if $ x > 0$}\\
        0 &\text{if $ x \leq 0$}
        \end{cases} $,
    \item For $\beta \in \R_+$, Swish($\beta$): $x \mapsto \frac{x}{1 + e^{-\beta x}}$.
\end{itemize}

ReLU is perhaps the most popular activation function used in practice. The LeakyReLU is closely connected to ReLU. Polynomial activations, in particular quadratic activations, were analyzed from a theoretical point of view in \citet{lenkaQuadratic, allenzhu20backward}.  The RePU is a natural combination of ReLU and Polynomial activations. Note that ReLU = RePU($1$). We treat them as separate cases due to importance of ReLU. Swish was introduced in \citet{swish}, where it was argued that it performs better than ReLU on some learning tasks.

The symmetries we consider in this section rely heavily on the \textit{homogeneity} of these activation functions.
\begin{definition}[Homogeneity]
For $p \in \R_+$, we say that a function $\sigma : \R \rightarrow \R$ is $p$-homogeneous if for every $\lambda \in \R_+$ and $x \in \R$, $$
\sigma(\lambda \cdot x) = \lambda^p \cdot \sigma(x).
$$
\end{definition}
\begin{observation}\label{obs:homogeneity}
For every $p \in \R_+$, Polynomial($p$) and RePU($p$) are $p$-homogeneous and $\text{ReLU}= \text{RePU}($p=1$)$ is  $1$-homogeneous. Swish is not homogeneous but it satisfies the following related identity: for every $x \in \R$ and $\lambda, \beta \in \R_+$,
$$
\text{Swish}(\beta)(\lambda \cdot x) = \lambda \cdot \text{Swish}(\lambda \cdot \beta)(x).
$$
\end{observation}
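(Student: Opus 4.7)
The plan is to verify each claim by direct substitution, since every statement in the observation reduces to a one-line algebraic identity once one unfolds the definitions.

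First I would handle Polynomial($p$): for $\lambda \in \R_+$ and $x \in \R$, we have $(\lambda x)^p = \lambda^p x^p$, which is exactly $p$-homogeneity. Since $\lambda > 0$, the power $\lambda^p$ is well-defined and no branch issues arise. For RePU($p$), I would split on the sign of $x$. If $x > 0$, then $\lambda x > 0$ (because $\lambda > 0$), so $\text{RePU}(p)(\lambda x) = (\lambda x)^p = \lambda^p x^p = \lambda^p \cdot \text{RePU}(p)(x)$. If $x \leq 0$, then $\lambda x \leq 0$, so both sides equal $0$. This covers ReLU as the special case $p=1$.

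For the Swish identity, I would plug $\lambda x$ into the definition and match terms:
\begin{align*}
\text{Swish}(\beta)(\lambda x) = \frac{\lambda x}{1 + e^{-\beta(\lambda x)}} = \lambda \cdot \frac{x}{1 + e^{-(\lambda \beta) x}} = \lambda \cdot \text{Swish}(\lambda \beta)(x),
\end{align*}
where the middle step uses only associativity of multiplication in the exponent so that $-\beta \lambda x = -(\lambda \beta) x$, recognizing the result as Swish with the rescaled parameter $\lambda \beta$ evaluated at $x$.

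To see that Swish is \emph{not} homogeneous, I would argue by contradiction: suppose there exists $p$ with $\text{Swish}(\beta)(\lambda x) = \lambda^p \text{Swish}(\beta)(x)$ for all $\lambda > 0$ and $x \in \R$. Combining with the identity just proved would force $\text{Swish}(\lambda \beta)(x) = \lambda^{p-1} \text{Swish}(\beta)(x)$ for every $\lambda, x$. Fixing any $x \neq 0$ and letting $\lambda \to \infty$, the left-hand side converges to $\max(0,x)$ (the ReLU shape arises as $\beta \to \infty$), while the right-hand side either blows up, vanishes, or stays constant depending on the sign of $p-1$; no choice of $p$ matches the bounded, nontrivial limit, giving the contradiction. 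None of these steps presents any real obstacle — the entire observation is essentially a bookkeeping check — so the only care required is handling the $x \leq 0$ case for RePU and keeping $\lambda > 0$ throughout so that $\lambda^p$ is unambiguous.
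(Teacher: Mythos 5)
Your verification is correct, and it is the direct computation the paper intends: the statement is given as an Observation with no written proof, and each claim indeed reduces to the one-line substitutions you carry out (with the case split for RePU and the rescaling $-\beta\lambda x = -(\lambda\beta)x$ for Swish). Your extra non-homogeneity argument is also sound, with the minor caveat that the contradiction requires choosing $x>0$ so that the limit $\max(0,x)$ is genuinely nontrivial, which your phrasing already implicitly does.
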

We start with deriving conserved quantities for $1$-homogeneous functions, which means they will be conserved for ReLU activations. We then consider $p$-homogeneous and Swish activation functions.

\begin{table}
\centering
\begin{tabular}{SS} \toprule
    { }                           & {Gradient Flow}    \\ \midrule \rule{0pt}{1ex}
    {(Leaky)ReLU}  & {$\|W^{(h)}\|_F^2 - \|W^{(h+1)}\|_F^2 = \text{const}$}  \\ \rule{0pt}{3ex} 
    {$x^p$ / RePU($p$)}           & {$\|W^{(h)}\|_F^2 - p \cdot \|W^{(h+1)}\|_F^2 = \text{const}$}   \\ \rule{0pt}{3ex} 
    {Swish}  & {$\|W^{(h)}\|_F^2 - \|W^{(h+1)}\|_F^2 - \|\beta^{(h)}\|^2_2 = \text{const}$}  \\ 
    \bottomrule
\end{tabular}

\rule{0pt}{2ex}    

\centering
\begin{tabular}{SS} \toprule
    { }                           & {Newtonian Dynamics}    \\ \midrule \rule{0pt}{1ex}
    {(Leaky)ReLU}  & {$\langle W^{(h)},\ddot{W}^{(h)} \rangle - \langle W^{(h+1)},\ddot{W}^{(h+1)} \rangle = 0$}  \\ \rule{0pt}{3ex} 
    {$x^p$ / RePU($p$)}           & {$\langle W^{(h)},\ddot{W}^{(h)} \rangle - p \cdot \langle W^{(h+1)}, \ddot{W}^{(h+1)} \rangle = 0$}  \\ \rule{0pt}{3ex} 
    {Swish}  & {$\langle W^{(h)}, \ddot{W}^{(h)} \rangle - \langle W^{(h)}, \ddot{W}^{(h+1)} \rangle - \langle \beta^{(h)}, \ddot{\beta}^{(h)} \rangle = 0$}   \\ 
    \bottomrule
\end{tabular}

\rule{0pt}{2ex} 

\centering
\begin{tabular}{SS} \toprule
    { }                           & {Nesterov's Accelerated Gradient Flow}    \\ \midrule \rule{0pt}{1ex}
    {(Leaky)ReLU}  & {$\langle W^{(h)}, \frac{3}{t} \dot{W}^{(h)} + \ddot{W}^{(h)} \rangle - \langle W^{(h+1)}, \frac{3}{t} \dot{W}^{(h+1)} + \ddot{W}^{(h+1)} \rangle = 0$}  \\ \rule{0pt}{3ex} 
    {$x^p$ / RePU($p$)}           & {$\langle W^{(h)}, \frac{3}{t} \dot{W}^{(h)} + \ddot{W}^{(h)} \rangle - p \cdot \langle W^{(h+1)}, \frac{3}{t} \dot{W}^{(h+1)} + \ddot{W}^{(h+1)} \rangle = 0$}  \\ \rule{0pt}{3ex} 
    {Swish}  & {$\langle W^{(h)}, \frac{3}{t} \dot{W}^{(h)} + \ddot{W}^{(h)} \rangle - \langle W^{(h+1)}, \frac{3}{t} \dot{W}^{(h+1)} + \ddot{W}^{(h+1)} \rangle - \langle \beta^{(h)}, \frac{3}{t}\dot{\beta}^{(h)} + \ddot{\beta}^{(h)} \rangle = 0$}   \\ 
    \bottomrule
\end{tabular}

\rule{0pt}{2ex} 

\centering
\begin{tabular}{SS} \toprule
    { }                           & {General Dynamics}    \\ \midrule \rule{0pt}{1ex}
    {(Leaky)ReLU}  & {$\langle W^{(h)}, \cnstd \dot{W}^{(h)} + \cnstdd \ddot{W}^{(h)} \rangle - \langle W^{(h+1)}, \cnstd \dot{W}^{(h+1)} + \cnstdd \ddot{W}^{(h+1)} \rangle = 0$}  \\ \rule{0pt}{3ex} 
    {$x^p$ / RePU($p$)}           & {$\langle W^{(h)}, \cnstd \dot{W}^{(h)} + \cnstdd \ddot{W}^{(h)} \rangle - p \cdot \langle W^{(h+1)}, \cnstd \dot{W}^{(h+1)} + \cnstdd \ddot{W}^{(h+1)} \rangle = 0$}  \\ \rule{0pt}{3ex} 
    {Swish}  & {$\langle W^{(h)}, \cnstd \dot{W}^{(h)} + \cnstdd \ddot{W}^{(h)} \rangle - \langle W^{(h+1)}, \cnstd \dot{W}^{(h+1)} + \cnstdd \ddot{W}^{(h+1)} \rangle - \langle \beta^{(h)}, \cnstd \dot{\beta}^{(h)} + \cnstdd \ddot{\beta}^{(h)} \rangle = 0$}   \\ 
    \bottomrule
\end{tabular}

\caption{\label{tab:homogeneity} Conserved quantities due to the homogeneity of the activation functions. Note that 
we write the conserved quantities {\em per layer}. In fact, these  quantities are conserved {\em per node}, see e.g. (\ref{equ:newtonianpernode}). The reason we write the conserved quantities in this form is that these expressions are slightly simpler and that it makes it easier to compare to other cases that we discuss later where we do not have conservation laws on this more detailed scale.}

\end{table}

\subsection{$1$-homogeneous}\label{sec:1homog}

\paragraph{Symmetry.} Let $h \in [K-1]$, $i \in [d_h]$, and $\epsilon > 0$ and assume that $\sigma_h$, the activation in the $h$-th layer, is $1$-homogeneous. No assumptions are made with respect to the activation functions on any other layer. Indeed, those can be chosen independently. Define $\w_\epsilon $ to be equal to $\w$ apart from 
$$W^{(h)}_\epsilon[i,:] := (1+\epsilon) \cdot W^{(h)}[i,:] ,$$
$$W^{(h+1)}_{\epsilon}[:,i] := \frac{1}{1+\epsilon} \cdot W^{(h+1)}[:,i] \text{.}$$
Then for $f$ as defined in \eqref{eq:defoff}
$$\pred_{\w} \equiv \pred_{\w_\epsilon} \text{.}$$ 
\begin{proof}
Let $a^{(h-1)} \in \R^{d_{h-1}}$ be the activations in layer $h-1$. Note that $
\epsilon$ influences the weight matrices only on level $h$ and $h+1$. Therefore, in order to show that $\pred_{\w} \equiv \pred_{\w_\epsilon}$ it suffices to show that 
\begin{align*}
&W^{(h+1)}_\epsilon \sigma_h \left(W^{(h)}_\epsilon a^{(h-1)} \right) - W^{(h+1)} \sigma_h \left(W^{(h)} a^{(h-1)} \right)  \\
&= \sigma_h \left(\left\langle W^{(h)}_\epsilon[i,:], a^{(h-1)} \right\rangle \right) W^{(h+1)}_\epsilon[:,i] - \sigma_h \left(\left\langle W^{(h)}[i,:], a^{(h-1)} \right\rangle \right) W^{(h+1)}[:,i] \\
&= \sigma_h \left(\left\langle (1+
\epsilon) \cdot W^{(h)}[i,:], a^{(h-1)} \right\rangle \right) \frac{1}{1+\epsilon} \cdot W^{(h+1)}[:,i] - \sigma_h \left(\left\langle W^{(h)}[i,:], a^{(h-1)} \right\rangle \right) W^{(h+1)}[:,i] \\
&= 0,
\end{align*}
where we used the $1$-homogeneity of $\sigma_h$ in the last equation.
\end{proof}

\paragraph{Generator.} Observe that the generator of this symmetry can be associated with a matrix $\dx \in \text{Mat}(m)$ with nonzero entries only on the diagonal. More concretely:
\begin{equation}\label{eq:generator1homog}
\dx[j,j] := \begin{cases}
        1 & \text{if the $j$-th index corresponds to a weight in $W^{(h)}[i,:]$}\\
        -1 & \text{if the $j$-th index corresponds to a weight in $W^{(h+1)}[:,i]$} \\
        0 &\text{otherwise}
        \end{cases}. 
\end{equation}
To see that this is in fact the generator one can approximate to the first order (linearize the transformation) in $\epsilon$:
$$W^{(h)}_\epsilon[i,:] =  W^{(h)}[i,:] + \epsilon \cdot W^{(h)}[i,:] ,$$
$$W^{(h+1)}_{\epsilon}[:,i] := \frac{1}{1+\epsilon} \cdot W^{(h+1)}[:,i] \approx W^{(h+1)}[:,i] - \epsilon \cdot W^{(h+1)}[:,i] \text{,}$$
and see that the non-identity part is equal to $\epsilon \cdot \dx \cdot \w$. 

For a derivation of generators in terms of Lie theory we refer the reader to Appendix~\ref{apd:liehomogeneity}.

\paragraph{Conserved Quantity.} Now where we know the generator we can apply it to the dynamics defined in Section~\ref{sec:learning}. Since all those dynamics are of the form of \eqref{equ:extensioneulerlagrange} we can find the conserverd quantities by inserting our generator into \eqref{equ:extensionconserved}. Applying \eqref{equ:extensionconserved} for the dynamics \eqref{equ:ode} where the generator $\dx$ is defined according to \eqref{eq:generator1homog} we get
\begin{align}
0 
&=\langle E(t, \w, \wdot, \wddot), \dx(\w) \rangle \nonumber \\
&= \langle \cnstdd \wddot + \cnstd \wdot, \dx(\w) \rangle \nonumber \\
&= \langle W^{(h)}[i,:],\cnstdd \ddot{W}^{(h)}[i,:] + \cnstd \dot{W}^{(h)}[i,:] \rangle - \langle W^{(h+1)}[:,i],\cnstdd \ddot{W}^{(h+1)}[:,i] + \cnstd \dot{W}^{(h+1)}[:,i] \rangle . \label{equ:newtonianpernode}
\end{align}
Equation (\ref{equ:newtonianpernode}) is the  {\em per neuron} \say{conserved} quantity for $1$-homogeneous activation functions.  To make this quantity easier to compare to the ones in other sections we derive a {\em per layer} conservation law by summing (\ref{equ:newtonianpernode}) over all neurons in the $h$-th layer 
\begin{equation}\label{eq:1homogconserved}
0 = \langle W^{(h)},\cnstdd \ddot{W}^{(h)} + \cnstd \dot{W}^{(h)} \rangle - \langle W^{(h+1)},\cnstdd \ddot{W}^{(h+1)} + \cnstd \dot{W}^{(h+1)} \rangle.
\end{equation}
This result is the template for entries in Table~\ref{tab:homogeneity}. We observe that for the case when $\cnstdd = 0, \cnstd = 1$ (corresponding to GF) the expression \eqref{eq:1homogconserved} can be written as a total derivative. This yields the conserved quantity
\begin{equation}
\|W^{(h)}\|_F^2 - \|W^{(h+1)}\|_F^2 = \text{const}.    
\end{equation}

\begin{figure}[h]
\centering

 \subfloat[GF]{\includegraphics[width=0.33\columnwidth]{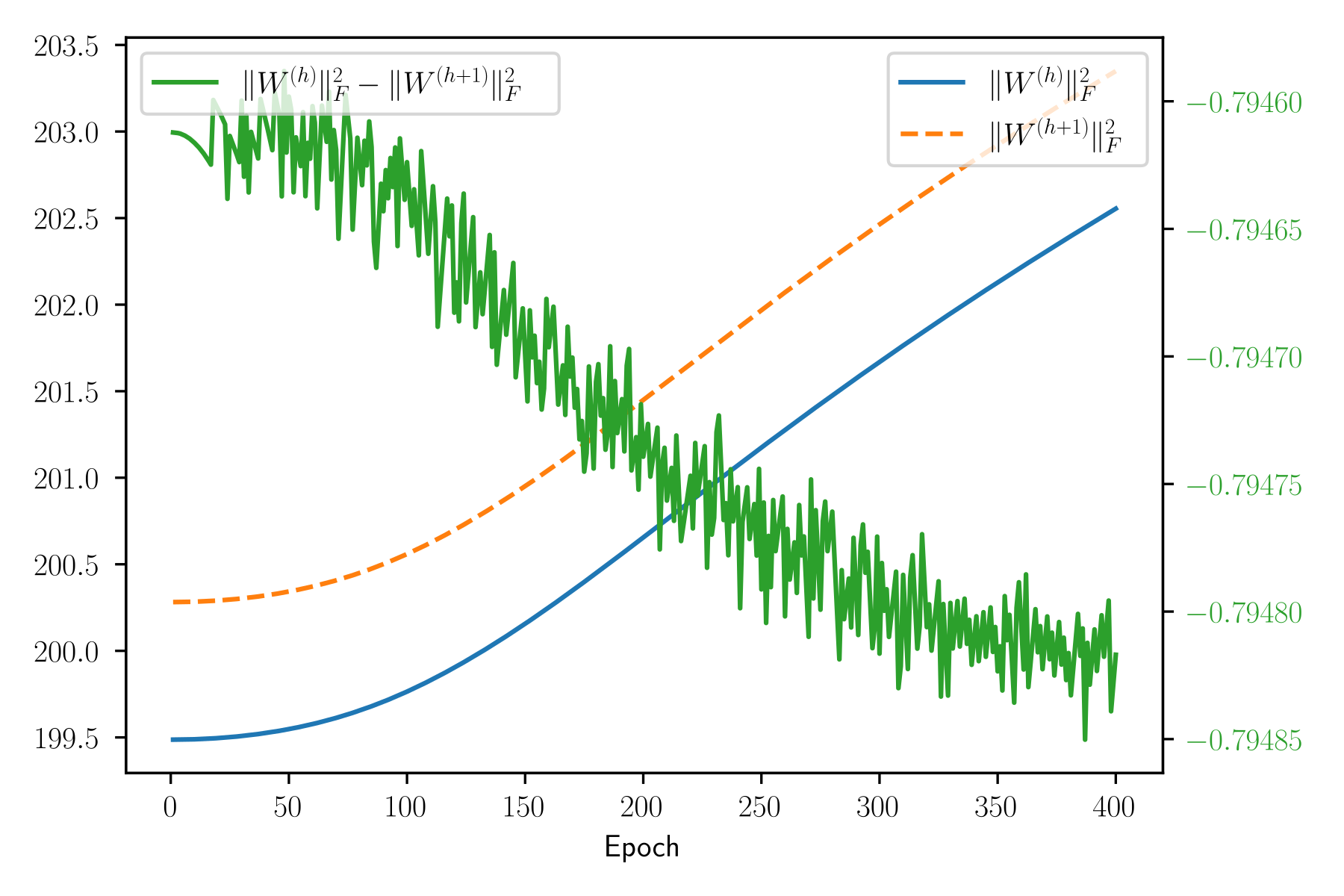}}
 \subfloat[ND]{\includegraphics[width=0.342\columnwidth]{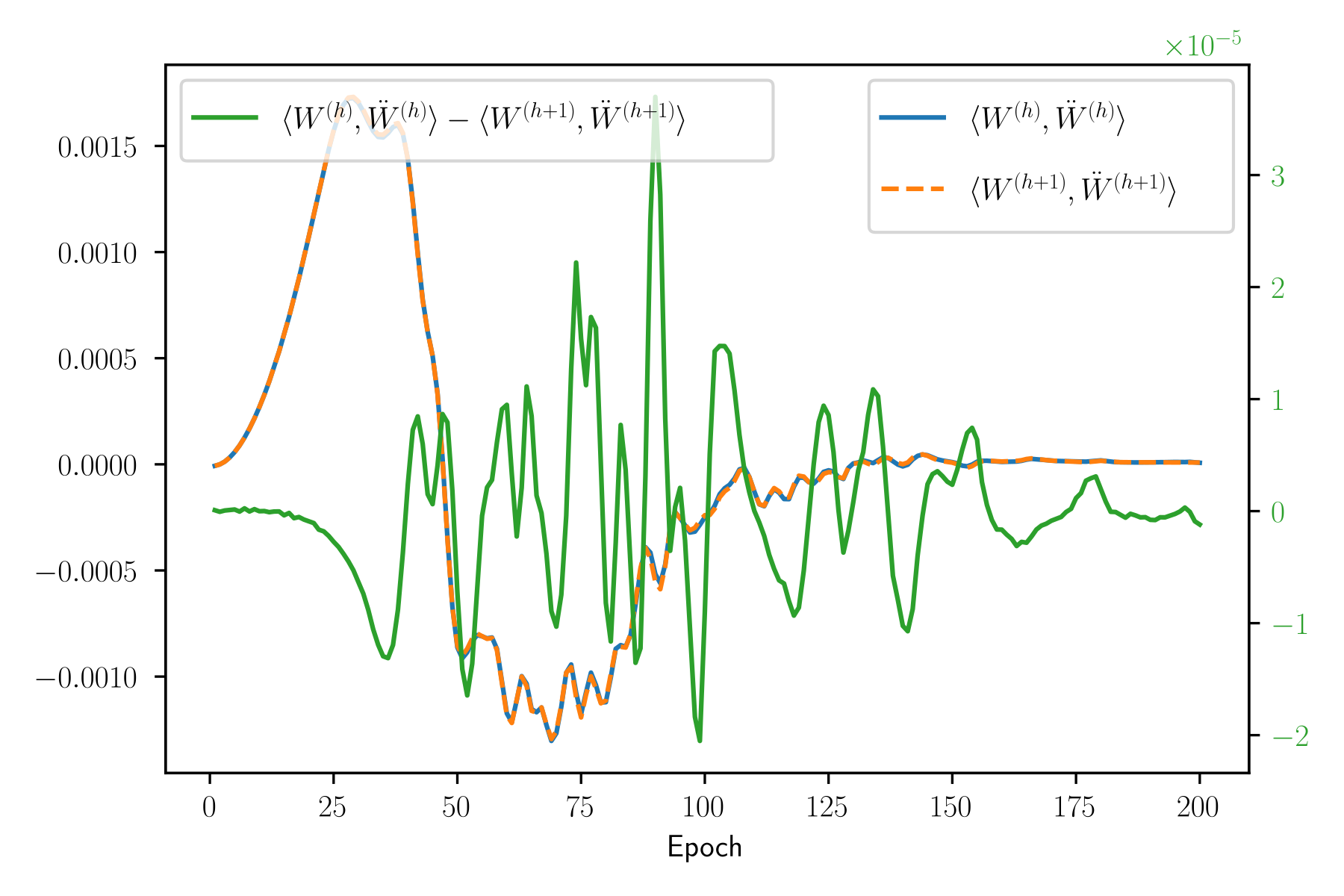}}
 \subfloat[NAGF]{\includegraphics[width=0.33\columnwidth]{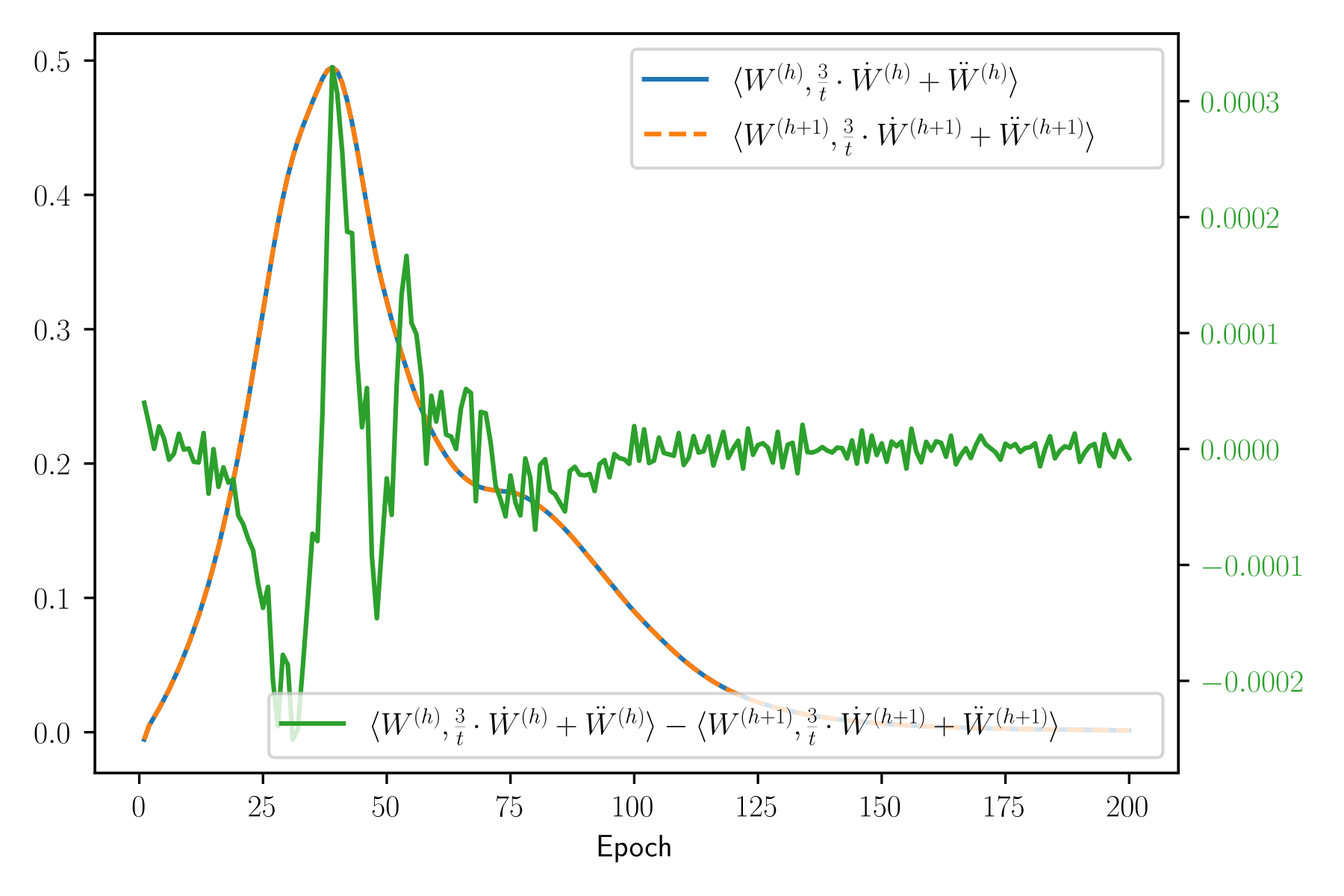}}
 
 \caption{Experiments for $1$-homogeneous activation functions for our main three optimization algorithms.}\label{fig:1hom}
\end{figure}

\paragraph{Implications.} The conserved quantities for $1$-homogeneous activation functions can be understood as bounds on differences between norms of weight matrices. E.g., as we saw, for GF, $\|W^{(h)}\|_F^2 - \|W^{(h+1)}\|_F^2$ stays conserved during the training process. Figure~\ref{fig:1hom} shows the result of an experiment confirming this conservation law (note the different scales of left and right y-axes). The parameters of the experiment are discussed at the end of this section. The equivalent entries in Table~\ref{tab:homogeneity} for ND and NAGF are shown also in Figure~\ref{fig:1hom}. This result for the GF was discovered by \citet{weihubalance}, where they showed that this bound is an important ingredient in guaranteeing the convergence of GD/SGD. Further, in \citet{weihubalance} the question was posed if similar conserved quantities exist for the optimization algorithms introduced in Section~\ref{sec:learning}.

We are now in the position to answer this question in the affirmative. We closely follow the lead of \citet{weihubalance} in our subsequent argument.  Recall that our symmetry stems from the fact that one can multiply one weight matrix by a constant $c$ and divide another weight matrix by the same constant without changing the prediction function. This scaling property poses difficulty when analyzing GD since it implies that the parameters are not constrainted to a compact set. As a consequence, GD and SGD are not guaranteed to converge [\citet{jordanGDdoesntconverge}, Proposition 4.11]. In the same spirit, \citet{shamirhowtoprove} showed that the most important barrier for showing algorithmic results is that the parameters are possibly unbounded during training.

Our aim is to derive bounds on $\|W^{(h)}\|_F^2 - \|W^{(h+1)}\|_F^2$ also for ND and NAGF. In general, the expressions on the right hand side of \eqref{eq:1homogconserved} cannot be written as a total derivatives. But often it is still possible to derive useful bounds.

Let us consider ND. We have
\begin{align*}
&\frac{d}{dt} \left(\langle W^{(h)},\dot{W}^{(h)} \rangle - \langle W^{(h+1)},\dot{W}^{(h+1)} \rangle \right) \\
&= \|\dot{W}^{(h)}\|_F^2 - \|\dot{W}^{(h+1)}\|_F^2 +  
\underbrace{\langle W^{(h)},\ddot{W}^{(h)} \rangle - \langle W^{(h+1)},\ddot{W}^{(h+1)} \rangle}_{=0, \text{see \eqref{eq:1homogconserved}, with $\cnstdd=1$ and $\cnstd=0$}}  \\
& = \|\dot{W}^{(h)}\|_F^2 - \|\dot{W}^{(h+1)}\|_F^2  \\
&\leq  \|\dot{W}^{(h)}\|_F^2 + \|\dot{W}^{(h+1)}\|_F^2.
\end{align*}
Note that  $\|\dot{W}^{(h)}\|_F^2 + \|\dot{W}^{(h+1)}\|_F^2$ can be interpreted as components of the \say{kinetic energy} of the system.

We need a second invariance to proceed. Note that the Lagrangian of ND does not depend on time and thus the Hamiltonian $\frac12 \|\wdot\|^2 + \loss(\w)$, that is the kinetic plus the potential energy of the system, stays preserved as well (see Example~\ref{exp:hamiltonian})! Combining our observations we have  
\begin{align}
& \frac{d^2}{dt^2} \left(\sum_{h=1}^{K-1} \left|\frac12\|W^{(h)}\|_F^2 - \frac12 \|W^{(h+1)}\|_F^2 \right|\right) \nonumber \\
&\leq \sum_{h=1}^{K-1} \left| \frac{d}{dt} \left(\langle W^{(h)},\dot{W}^{(h)} \rangle - \langle W^{(h+1)},\dot{W}^{(h+1)} \rangle \right) \right| \nonumber \\
&\leq \sum_{h=1}^{K-1} \left| \|\dot{W}^{(h)}\|_F^2 + \|\dot{W}^{(h+1)}\|_F^2 \right| \nonumber \\
&\leq 2\|\wdot\|^2 \nonumber \\
&= \left(4\loss(\w) + 2\|\wdot\|^2 \right) - 4\loss(\w)   \nonumber\\
&= 4\loss(\w(0)) + 2\|\wdot(0)\|^2 - 4\loss(\w) &&\text{Hamiltonian stays conserved} \nonumber \\
&\leq 4\left(\loss(\w(0)) - \loss(\w) \right). &&\text{assuming $\wdot(0) = 0$} \nonumber \\
&\leq 4\left(\loss(\w(0)) - \loss^* \right) &&\text{where } \loss^* = \inf_{\x} \loss(\x) \label{eq:NDsecondderivofweights}
\end{align}

Equation \eqref{eq:NDsecondderivofweights} bounds the evolution of the difference between Frobenius norms of weight matrices. We can write \eqref{eq:NDsecondderivofweights} as
\begin{equation}\label{eq:NDsecondderivativebyinitit}
\frac{d^2}{dt^2} \left(\sum_{h=1}^{K-1} \left|\frac12\|W^{(h)}\|_F^2 - \frac12 \|W^{(h+1)}\|_F^2 \right| \right) \leq 4(\loss(\w(0)) - \loss^*) \text{.}
\end{equation}
We see that the difference of norms no longer stays constant for this optimization algorithm. But we do have a bound on the evolution of this difference in terms of the loss at initialization - a fixed parameter that is easily computable. By integrating \eqref{eq:NDsecondderivativebyinitit} we get
\begin{equation}\label{eq:ndhomogfinal}
\left[\sum_{h=1}^{K-1} \left| \|W^{(h)}(t)\|_F^2 - \|W^{(h+1)}(t)\|_F^2 \right| \right] - \left[ \sum_{h=1}^{K-1} \left| \|W^{(h)}(0)\|_F^2 - \|W^{(h+1)}(0)\|_F^2 \right| \right] \leq O(\left(\loss(\w(0)) - \loss^* \right) \cdot t^2).
\end{equation}

This means that if the difference between the norms is small at initialization (which happens for popular initialization schemes like $\frac{1}{\sqrt{d}}$ Gaussian initialization or Xavier/Kaiming when the dimensions of the matrices are the same) then it will remain bounded by the expression on the right hand side. 

For NAGF if we analyze how $\frac12 \|\wdot\|^2 + \loss(\w)$, the \say{energy} of the system , evolves we see that it decreases. Namely,
\begin{align}
&\frac{d}{dt} \left[ \frac12 \|\wdot\|^2 + \loss(\w) \right] \nonumber \\
&= \langle \wdot, \wddot \rangle + \langle \nabla \loss(\w), \wdot \rangle \nonumber  \\
&=- \frac{3}{t} \|\wdot\|^2. && \text{applying }\eqref{eq:odeNAGF} \label{eq:nesterovenergydecreases}
\end{align}
This bound is of interest on it's own. We leave it as an open question if it can be used to derive a bound on $\sum_{h=1}^{K-1} \left|\frac12\|W^{(h)}\|_F^2 - \frac12 \|W^{(h+1)}\|_F^2 \right|$, similarly to what was done for the case of ND.

\paragraph{Experiments.} We employ the KMNIST dataset for experiments. In fact, for complexity reasons we use a subset of the training dataset of size $1000$ for training. We run GD, ND, and NAGD (as described in Section~\ref{sec:learning}) with a learning rate of $0.01$ using negative log-likelihood loss. The architecture used is a fully connected NN with ReLU activation functions with the following dimensions $d_x = d_0 = 784, d_1 = 200, d_2 = 200, d_3 = 200, d_y = d_4 = 10$. We plot the quantities for $h=2$, corresponding to $W^{(2)}$ and $W^{(3)}$.

\subsection{$p$-homogeneous}
Note that $p$-homogeneous activation functions are a generalization of the $1$-homogeneous case. 
The symmetry in this case reads
$$W^{(h)}_\epsilon[i,:] := (1+\epsilon) \cdot W^{(h)}[i,:] ,$$
$$W^{(h+1)}_{\epsilon}[:,i] := \frac{1}{(1+\epsilon)^p} \cdot W^{(h+1)}[:,i] \approx (1 - p \cdot \epsilon) \cdot W^{(h+1)}[:,i] \text{.}$$
The derivation of the conserved quantities are straightforward generalizations of the derivations for the $1$-homogeneous case. The results are summarized in Table~\ref{tab:homogeneity}.

\begin{figure}[ht!]
    \centering 
    \includegraphics[width=0.4\textwidth]{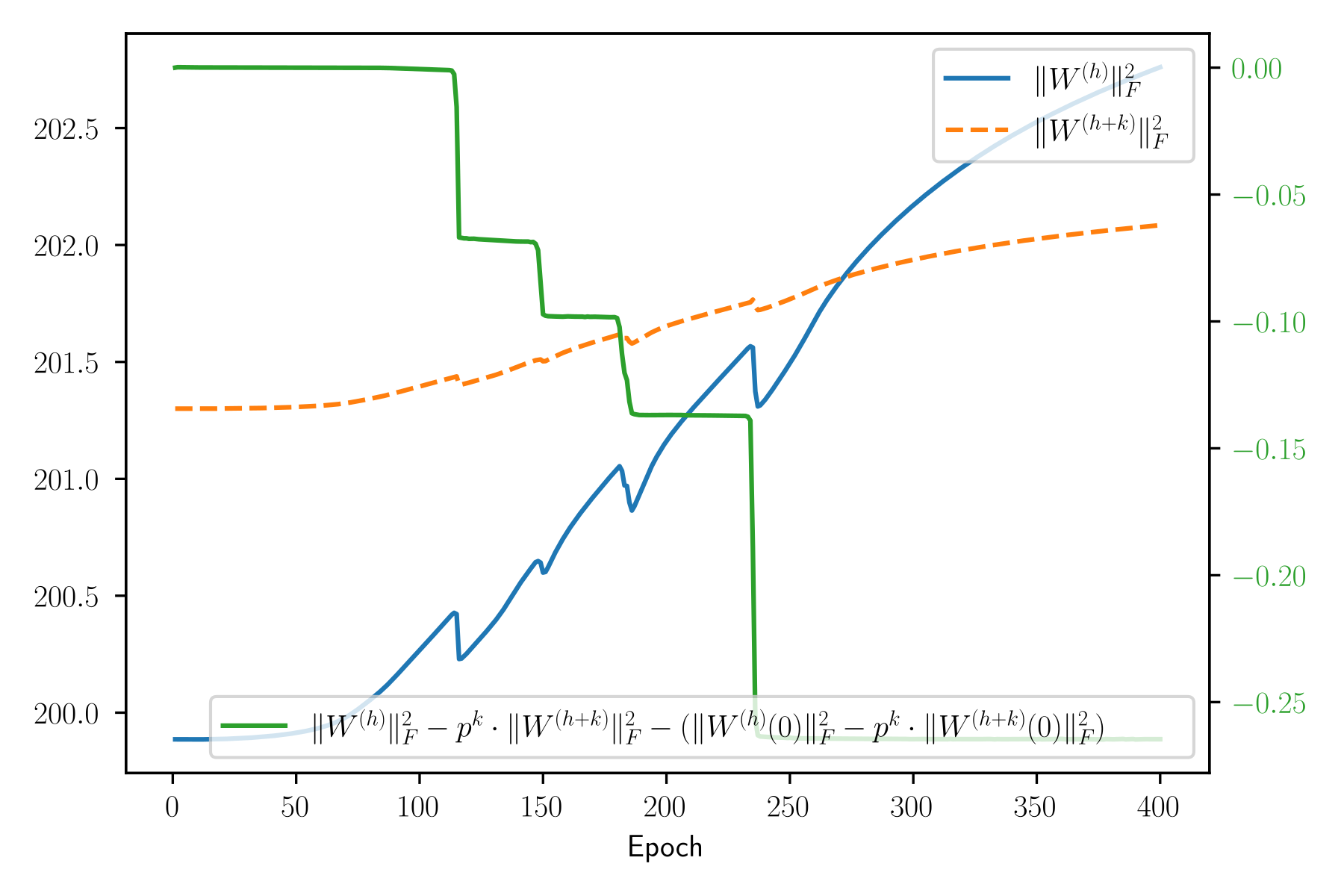}
    \caption{\label{fig:phomGF} Experiment for $p$-homogeneous activation function and GF ($p = 2, k = 2$).}
    \label{fig:phomog}
\end{figure}

\paragraph{Implications.} 

If the activation functions between layer $h$ and $h+k$ are all $p$-homogeneous then for the GF we have
$$
\|W^{(h)}(0)\|_F^2 - \|W^{(h)}(t)\|_F^2 = p^k \left[ \|W^{(h+k)}(0)\|_F^2 - \|W^{(h+k)}(t)\|_F^2 \right].
$$
This means that if the network is deep  (high $k$) and $p > 1$ then the layer $h+k$ moves exponentially slower than the layer $h$. If $p < 1$, the situation is reversed. Both of these cases can potentially cause trouble. Figure~\ref{fig:phomGF} shows the results of an experiment. The plots confirms this conservation law.

\paragraph{Experiments.} The setup is the same as in Section~\ref{sec:1homog} apart from the architecture used. We use a fully connected NN with with the following dimensions $d_x = d_0 = 784, d_1 = 200, d_2 = 200, d_3 = 200, d_4 = 200, d_y = d_5 = 10$. We plot quantities for $W^{(2)}$ and $W^{(4)}$. Activation functions are $\sigma_1 \equiv \text{ReLU},\sigma_2,\sigma_3 \equiv \text{RePU}(2), \sigma_4 \equiv \text{ReLU}$. Thus the factor $p^k$ that appears in the legend of Figure~\ref{fig:phomog} becomes $2^2 = 4$.

\subsection{Swish}

There are two main variants of the Swish activation function considered in the literature. One fixes the $\beta$ parameter to $1$ for all neurons with the Swish activation, the other makes the $\beta$ trainable. We consider the latter case. More formally, we assume that for some $h \in [K-1]$ the $\sigma_h$ (activations in the $h$-th layer) are Swish functions. Moreover, for every $i \in [d_h]$ the activation function for the $i$-th neuron in layer $h$ is equal to $\text{Swish}\left(\beta^{(h)}_i \right)$, where $\beta^{(h)}_i$ is a trainable parameter. The parameters $\bet^{(h)}$ are trained with the same optimization algorithm as the rest of the network. As for the case of $1$-homogeneous no assumptions are made with respect to the activation functions on any other layer.

\paragraph{Symmetry.} Define $\w_\epsilon$ to be equal to $\w$ apart from:
$$W^{(h)}_\epsilon[i,:] := (1+\epsilon) \cdot W^{(h)}[i,:] ,$$
$$\beta^{(h)}_{\epsilon}[i] := \frac{1}{1+\epsilon} \cdot \beta^{(h)}[i], $$
$$W^{(h+1)}_{\epsilon}[:,i] := \frac{1}{1+\epsilon} \cdot W^{(h+1)}[:,i] \text{.}$$
Then observe that
$$f_{\w} \equiv f_{\w_\epsilon} \text{.}$$
\begin{proof}
Let $a^{(h-1)} \in \R^{d_{h-1}}$ be activation functions of neurons in layer $h-1$. Then
\begin{align*}
&W^{(h+1)}_\epsilon \cdot \text{Swish}(\bet^{(h)}_{\epsilon}) \left(W^{(h)}_\epsilon a^{(h-1)} \right) - W^{(h+1)} \cdot \text{Swish}(\bet^{(h)}) \left(W^{(h)} a^{(h-1)} \right)  \\
&= \text{Swish}(\beta^{(h)}_\epsilon[i]) \left(\left\langle W^{(h)}_\epsilon[i,:], a^{(h-1)} \right\rangle \right) W^{(h+1)}_\epsilon[:,i] - \text{Swish}(\beta^{(h)}[i]) \left(\left\langle W^{(h)}[i,:], a^{(h-1)} \right\rangle \right) W^{(h+1)}[:,i] \\
&= \text{Swish} \left(\frac{1}{1+\epsilon} \cdot  \beta^{(h)}[i] \right) \left(\left\langle (1+
\epsilon) \cdot W^{(h)}[i,:], a^{(h-1)} \right\rangle \right) \frac{1}{1+\epsilon} \cdot W^{(h+1)}[:,i] + \\
&- \text{Swish}(\beta^{(h)}[i]) \left(\left\langle W^{(h)}[i,:], a^{(h-1)} \right\rangle \right) W^{(h+1)}[:,i] \\
&= 0,
\end{align*}
where in the last equality we used the property from Observation~\ref{obs:homogeneity}.
\end{proof}
The derivation of conserved quantities follows the same pattern as the derivation for the $1$-homogeneous and $p$-homogeneous cases. The results are summarized in Table~\ref{tab:homogeneity}.

\paragraph{Implications.} The Swish function was first proposed in \citet{swish}. The authors used an automated search procedure to discover new activation functions. The one that performed the best across many challenging tasks was Swish. It consistently worked better (or on par) with the standard ReLU activation. For the GF the conserved quantity is
$$
\|W^{(h)}\|_F^2 - \|W^{(h+1)}\|_F^2 - \|\beta^{(h)}\|^2_2 = \text{const}.
$$
We leave it as a open question to determine whether the empirically observed performance advantage can be explained in terms of the above conservation law. We also point the reader to Section~\ref{sec:biases} where different conserved quantities are derived and which potentially might lead to an explanation for the advantage.

\subsection{$p$-homogeneous \& Swish with Bias}\label{sec:biases}

\begin{table}
\centering
\begin{tabular}{SS} \toprule
    { }                           & {General Dynamics}    \\ \midrule \rule{0pt}{1ex}
    {(Leaky)ReLU}  & {$\begin{array} {lcl} \left\langle W^{(h)}, \cnstd \dot{W}^{(h)} + \cnstdd \ddot{W}^{(h)} \right\rangle + \left\langle b^{(h)}, \cnstd \dot{b}^{(h)} + \cnstdd \ddot{b}^{(h)} \right\rangle + &  &  \\ - \left\langle W^{(h+1)}, \cnstd \dot{W}^{(h+1)} + \cnstdd \ddot{W}^{(h+1)} \right\rangle & = & 0 \end{array}$}  \\ \rule{0pt}{6ex} 
    {$x^p$ / RePU($p$)}           & {$\begin{array} {lcl} \left\langle W^{(h)}, \cnstd \dot{W}^{(h)} + \cnstdd \ddot{W}^{(h)} \right\rangle + \left\langle b^{(h)}, \cnstd \dot{b}^{(h)} + \cnstdd \ddot{b}^{(h)} \right\rangle + &  &  \\ - p \cdot \left\langle W^{(h+1)}, \cnstd \dot{W}^{(h+1)} + \cnstdd \ddot{W}^{(h+1)} \right\rangle & = & 0 \end{array}$}  \\ \rule{0pt}{6ex} 
    {Swish}  & {$\begin{array} {lcl} \left\langle W^{(h)}, \cnstd \dot{W}^{(h)} + \cnstdd \ddot{W}^{(h)} \right\rangle + \left\langle b^{(h)}, \cnstd \dot{b}^{(h)} + \cnstdd \ddot{b}^{(h)} \right\rangle + &  &  \\ - \left\langle W^{(h+1)}, \cnstd \dot{W}^{(h+1)} + \cnstdd \ddot{W}^{(h+1)} \right\rangle - \left\langle \beta^{(h)}, \cnstd \dot{\beta}^{(h)} + \cnstdd \ddot{\beta}^{(h)} \right\rangle & = & 0 \end{array}$}   \\ 
    \bottomrule
\end{tabular}

\caption{\label{tab:homogeneitybias} Conserved quantities due to the homogeneity of the activation functions with bias terms. Note that we only write it for General Dynamics as all specific algorithms we consider are special cases.}

\end{table}

In this subsection we explore how the presence of biases in the network influences symmetries and conserved quantities. The symmetries presented in the previous subsections in the presence of biases generalize in a natural. For instance for the case of $1$-homogeneous activation the symmetry informally says: multiply weights and biases of layer $h$ by a constant and divide weights of layer $h+1$ by the same constant. More formally if for $h \in [K-1]$ we have that $\sigma_h$ is $p$-homogeneous then the symmetry is:

\paragraph{Symmetry for $p$-homogeneous with bias.} Define $\w_\epsilon = (W^{(1)}_\epsilon, b^{(1)}, \dots, W^{(K)}_\epsilon, b^{(K)})$ to be equal to $\w$ apart from:
$$W^{(h)}_\epsilon[i,:] := (1+\epsilon) \cdot W^{(h)}[i,:] ,$$
$$b^{(h)}_{\epsilon}[i] := (1+\epsilon) \cdot b^{(h)}[i], $$
$$W^{(h+1)}_{\epsilon}[:,i] := \frac{1}{(1+\epsilon)^p} \cdot W^{(h+1)}[:,i] \text{.}$$
Then observe that
$$f_{\w} \equiv f_{\w_\epsilon} \text{.}$$

If for $h \in [K-1]$ we have that $\sigma_h$ is Swish then the symmetry is:

\paragraph{Symmetry for Swish with bias.} Define $\w_\epsilon = (W^{(1)}_\epsilon, b^{(1)}, \dots, W^{(K)}_\epsilon, b^{(K)})$ to be equal to $\w$ apart from:
$$W^{(h)}_\epsilon[i,:] := (1+\epsilon) \cdot W^{(h)}[i,:] ,$$
$$b^{(h)}_{\epsilon}[i] := (1+\epsilon) \cdot b^{(h)}[i], $$
$$\beta^{(h)}_{\epsilon}[i] := \frac{1}{1+\epsilon} \cdot \beta^{(h)}[i], $$
$$W^{(h+1)}_{\epsilon}[:,i] := \frac{1}{1+\epsilon} \cdot W^{(h+1)}[:,i] \text{.}$$
Then observe that
$$f_{\w} \equiv f_{\w_\epsilon} \text{.}$$

The proofs are analogous to the previous cases. The results are summarized in Table~\ref{tab:homogeneitybias}.

\paragraph{Implications.} Observe that conserved quantity for GF can (as before) be written as a total derivative and for $1$-homogeneous functions gives
\begin{equation}\label{eq:biasrelu}
\|W^{(h)}\|_F^2 + \|b^{(h)}\|^2_2 - \|W^{(h+1)}\|_F^2  = \text{const},
\end{equation}
while for Swish it gives
\begin{equation}\label{eq:biasswish}
\|W^{(h)}\|_F^2 + \|b^{(h)}\|^2_2 - \|W^{(h+1)}\|_F^2 - \|\beta^{(h)}\|^2_2 = \text{const}.
\end{equation}
If we assume that all activation functions in the NN are ReLU and sum \eqref{eq:biasrelu} across all layers we will get
$$
\|W^{(1)}\|_F^2 - \|W^{(K)}\|_F^2 + \sum_{h=1}^{K-1} \|b^{(h)}\|^2_2  = \text{const}.
$$
If instead all activation functions are Swish then if we sum \eqref{eq:biasswish} across all layers we will get
$$
\left(\|W^{(1)}\|_F^2 - \|W^{(K)}\|_F^2 \right) + \left( \sum_{h=1}^{K-1} \|b^{(h)}\|^2_2 - \sum_{h=1}^{K-1} \|\beta^{(h)}\|^2_2 \right)  = \text{const}.
$$
We leave it as an open problem to determine if the two above conservation laws can explain the empirical advantage of Swish over ReLU.

\section{Linear Activation Functions Lead To Balancedness of Weight Matrices}

Let us now investigate the invariances for the case of linear activation functions. Not suprisingly, due to the significantly larger group of symmetries present in this case (compared to $p$-homogeneous activations) we arrive at significantly stronger conclusions.

\begin{table}
\centering
\begin{tabular}{SS} \toprule
    {Algorithm }                           & {Conserved Quantities}    \\ \midrule \rule{0pt}{1ex}
    {General Dynamics}  & {$ W^{(h)}\left(\cnstdd \ddot{W}^{(h)} + \cnstd \dot{W}^{(h)}\right)^T - \left(\cnstdd \ddot{W}^{(h+1)} + \cnstd \dot{W}^{(h+1)}\right)^T W^{(h+1)} = 0  $}  \\ \rule{0pt}{3ex} 
    {Newtonian Dynamics}  & {$W^{(h)}\left(\ddot{W}^{(h)}\right)^T - \left(\ddot{W}^{(h+1)}\right)^T W^{(h+1)} = 0 $}  \\ \rule{0pt}{3ex} 
    {Nesterov's Accelerated Gradient Flow}           & {$W^{(h)}\left(\ddot{W}^{(h)} + \frac{3}{t}\dot{W}^{(h)}\right)^T - \left(\ddot{W}^{(h+1)} + \frac{3}{t}\dot{W}^{(h+1)}\right)^T W^{(h+1)} = 0    $}   \\ \rule{0pt}{3ex} 
    {Gradient Flow}  & {$W^{(h)}\left(W^{(h)}\right)^T - \left(W^{(h+1)}\right)^T W^{(h+1)} = \text{const}.$}  \\ 
    \bottomrule
\end{tabular}
\caption{\label{tab:linear}Conserved quantities due to linear activation symmetry.}
\end{table}

\paragraph{Symmetry.} Let $h \in [K-1], \epsilon > 0$, and assume that $\sigma_h$ (activation in the $h$-th layer) is linear, that is $x \mapsto c \cdot x$ for a fixed $a$. Moreover, for a matrix $A \in \text{Mat}(d_h)$ define to be equal to $\w$ apart from:
$$W^{(h)}_\epsilon := (I+\epsilon \cdot A) \cdot W^{(h)},$$
$$W^{(h+1)}_{\epsilon} := W^{(h+1)} \cdot (I + \epsilon \cdot A)^{-1} \text{.}$$
Note that the eigenvalues of $(I + \epsilon \cdot A)$ are of the form $1+\epsilon \lambda_i$, where the $\lambda_i$ are the eigenvalues of the matrix $A$. Therefore, all eigenvalues of $(I + \epsilon \cdot A)$ are strictly positive if $\epsilon$ is sufficiently small. Since we are interested in the case $\epsilon \rightarrow 0$, the inverse $(I + \epsilon \cdot A)^{-1}$ exists. For $f$ as defined in \eqref{eq:defoff} we have
$$f_{\w} \equiv f_{\w_\epsilon} \text{.}$$
\begin{proof}
Let $a^{(h-1)} \in \R^{d_{h-1}}$ be the activation of neurons in layer $h-1$. Then
\begin{align*}
&W^{(h+1)}_\epsilon \sigma_h \left(W^{(h)}_\epsilon a^{(h-1)} \right) - W^{(h+1)} \sigma_h \left(W^{(h)} a^{(h-1)} \right)  \\
&=  W^{(h+1)} (I + \epsilon \cdot A)^{-1}  \sigma_h \left( (I+
\epsilon \cdot A)  W^{(h)} a^{(h-1)}  \right) - W^{(h+1)} \sigma_h \left( W^{(h)} a^{(h-1)}  \right) \\
&= 0,
\end{align*}
where in the last equality we used the fact that $\sigma_h(x) = c \cdot x$.
\end{proof}

\paragraph{Generator.} Now observe that the generator of this symmetry can be associated with a linear transformation $\dx_A : \R^m \rightarrow \R^m$ (which can also be thought of as a matrix but it is simpler to specify the transformation). It is defined as:
\begin{equation}\label{eq:generatorlinear}
\dx_A \left( \left(W^{(1)}, \dots, W^{(K)}\right) \right) := \left( 0, \dots, 0, A \cdot W^{(h)}, - W^{(h+1)} \cdot A, 0, \dots, 0 \right).
\end{equation}
To see that this is in fact the case let us expand out $W^{(h)}_\epsilon$ and $W^{(h+1)}_{\epsilon}$ to the first order in $\epsilon$. We get
$$W^{(h)}_\epsilon =  W^{(h)} + \epsilon \cdot A \cdot W^{(h)},$$
$$W^{(h+1)}_{\epsilon} = W^{(h+1)}(I + \epsilon \cdot A)^{-1} \approx W^{(h+1)} - \epsilon \cdot W^{(h+1)} \cdot A \text{.}$$
We see that the linear part is equal to $\epsilon \cdot \dx_A \cdot \w$. 

For derivation of generators in terms of Lie theory we refer the reader to Appendix~\ref{apd:lielinear}.

\paragraph{Conserved Quantity.} With the generators in hand we are ready to find conserved quantities for systems evolving according to \eqref{equ:extensioneulerlagrange}. For every $A \in \text{Mat}(d_h)$, the corresponding generator $\dx$ is defined according to \eqref{eq:generatorlinear}. 
Applying \eqref{equ:extensionconserved} for the dynamic \eqref{equ:ode} we get
\begin{align}
0 
&=\langle E(t, \w, \wdot, \wddot), \dx(\w) \rangle \nonumber \\
&= \langle \cnstdd \wddot + \cnstd \wdot, \dx(\w) \rangle \nonumber \\
&= \langle \cnstdd \ddot{W}^{(h)} + \cnstd \dot{W}^{(h)}, A \cdot W^{(h)} \rangle - \langle \cnstdd \ddot{W}^{(h+1)} + \cnstd \dot{W}^{(h+1)} , W^{(h+1)} \cdot A \rangle \nonumber \\
&= \trace\left(\left(\cnstdd \ddot{W}^{(h)} + \cnstd \dot{W}^{(h)}\right)^T A W^{(h)}\right) - \trace\left(\left(\cnstdd \ddot{W}^{(h+1)} + \cnstd \dot{W}^{(h+1)}\right)^T W^{(h+1)} A \right) \nonumber \\
&= \trace\left(\left[ W^{(h)}\left(\cnstdd \ddot{W}^{(h)} + \cnstd \dot{W}^{(h)}\right)^T - \left(\cnstdd \ddot{W}^{(h+1)} + \cnstd \dot{W}^{(h+1)}\right)^T W^{(h+1)}\right] A\right). \label{equ:linearconserved}
\end{align}
To extract from this the desired conserved quantity we use the following lemma. 
\begin{lemma}\label{lem:tracetozero}
For $n \in \N$ let $X \in \text{Mat}(n)$. If 
\begin{align*}
    \trace (X A) = 0,
\end{align*}
for every $A \in \text{Mat}(n)$ then $X = 0$.
\end{lemma}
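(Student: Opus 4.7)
The plan is to prove the contrapositive concretely by choosing, for each entry of $X$, a test matrix $A$ whose trace pairing with $X$ isolates that entry. Specifically, for indices $i, j \in \{1, \dots, n\}$, I would take $A = E_{ji}$, the standard basis matrix with a $1$ in position $(j,i)$ and zeros elsewhere.

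The key computation is then to observe that $(X E_{ji})_{kk} = \sum_{\ell} X_{k\ell} (E_{ji})_{\ell k}$, and $(E_{ji})_{\ell k}$ is nonzero only when $\ell = j$ and $k = i$, in which case it equals $1$. Therefore $\trace(X E_{ji}) = (X E_{ji})_{ii} = X_{ij}$. Since by hypothesis $\trace(XA) = 0$ for every $A \in \text{Mat}(n)$, applying this to $A = E_{ji}$ for each pair $(i,j)$ forces $X_{ij} = 0$. Ranging over all $i, j$ gives $X = 0$.

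There is no real obstacle here; the lemma is a restatement of the nondegeneracy of the Frobenius/trace pairing $\langle X, A \rangle := \trace(X A^T)$ on $\text{Mat}(n)$ (equivalently, the trace form $(X, A) \mapsto \trace(XA)$ is nondegenerate on the space of all matrices, not just on the symmetric or skew-symmetric subspaces). The only thing to be careful about is the transpose convention — one wants to pick $A = E_{ji}$ rather than $E_{ij}$ so that the indices line up and $\trace(X E_{ji}) = X_{ij}$ directly. After that, the proof is a one-line verification.
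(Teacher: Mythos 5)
Your proof is correct, but it takes a different route from the paper. You isolate each entry of $X$ by testing against the standard basis matrices $E_{ji}$, computing $\trace(X E_{ji}) = X_{ij}$ directly; this is the standard nondegeneracy-of-the-trace-pairing argument, and your index bookkeeping (using $E_{ji}$ rather than $E_{ij}$ to hit $X_{ij}$) is right. The paper instead writes $X = U D V^T$ via the singular value decomposition, uses the cyclic property of the trace to reduce the hypothesis to $\trace(DB) = 0$ for all $B$ (since $A \mapsto V^T A U$ is a bijection of $\text{Mat}(n)$), and then picks $B = D$ to conclude $\sum_i D[i,i]^2 = 0$, hence $D = 0$ and $X = 0$. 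Both arguments are valid one-liners at heart; yours is more elementary (no SVD needed) and yields the vanishing of each entry individually, which also makes it stylistically consistent with the paper's own proof of Lemma~\ref{lem:rotationtrace}, where exactly this kind of elementary test matrix ($P_{i,j}$ restricted to the skew-symmetric subspace) is used. The paper's SVD route buys nothing extra here beyond illustrating the cyclic-trace trick; if anything, your version generalizes more transparently to the restricted-subspace variants that appear later in the paper.
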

\begin{proof}
Using the SVD decomposition we can write $X$ as $X = U D V^T$, where $U,V \in U(n)$ (group of unitary matrices) and $D$ is diagonal. Hence
\begin{align}
0 = \trace(X A) = 
\trace(U D V^T A) =
\trace(U D V^T A U U^T) = \trace(U D B U^T) = \trace(D B),
\end{align}
where $B = V^T A U$ can be any matrix (as the assumption holds for every $A \in \text{Mat}(n)$) and where in the last step we have used the cyclic property of the trace. In particular, picking $B$ to be $D$ we get that:
\begin{align*}
\sum_{i=1}^{n} D[i,i]^2 = 0,
\end{align*}
which implies that $D$ must be the zero matrix.
\end{proof}
Applying Lemma~\ref{lem:tracetozero} to (\ref{equ:linearconserved}) we conclude
\begin{equation}\label{eq:linearconserved}
W^{(h)}\left(\cnstdd \ddot{W}^{(h)} + \cnstd \dot{W}^{(h)}\right)^T - \left(\cnstdd \ddot{W}^{(h+1)} + \cnstd \dot{W}^{(h+1)}\right)^T W^{(h+1)} = 0.    
\end{equation}
The left-hand side of equation \eqref{eq:linearconserved} represents the \say{conserved} quantity for linear activation function. This result is the template for entries in Table~\ref{tab:linear}. We observe that for the case when $\cnstdd = 0, \cnstd = 1$ (which corresponds to the GF) the expression from \eqref{eq:linearconserved} can be written as a total derivative. This yields the conserved quantity
\begin{equation}\label{eq:linearGFconserved}
W^{(h)}\left(W^{(h)}\right)^T - \left(W^{(h+1)}\right)^T W^{(h+1)} = \text{const}.    
\end{equation}

\paragraph{Implications.} The conserved quantities derived using the symmetry of linear activation function can be understood as balancedness conditions on consecutive weight matrices. For the case of the GF the condition $W^{(h)}\left(W^{(h)}\right)^T - \left(W^{(h+1)}\right)^T W^{(h+1)} = \text{const}$ was first proved for the case of fully linear NNs in \citet{aroraImplicit}. The proof was later generalized to the case where only one layer need to be linear in \citet{weihubalance}. The fact that this quantity stays conserved was a crucial building block in \citet{aroralinearnetworks} where the authors showed that GD for linear NNs converges to a global optimum. The fact that the weight matrices $W^{(1)}, \dots, W^{(K)}$ remain balanced (which means that $W^{(h)}\left(W^{(h)}\right)^T \approx \left(W^{(h+1)}\right)^T W^{(h+1)}$) throughout the optimization allows one to derive a closed form expression for the evolution of the end-to-end matrix $W^{(K)} \cdot \dots \cdot W^{(1)}$. Then one is able to show that $\frac{d}{dt} \loss(\w)$ is proportional to the minimal singular value of the end-to-end matrix. If the weight matrices are approximately balanced at initialization and a condition on the distance of the initial end-to-end matrix to the target matrix is satisfied (plus a technical assumption on the minimal number of neurons in the hidden layers) then convergence to a global optimum is guaranteed. 

One can understand the condition $W^{(h)}\left(W^{(h)}\right)^T \approx \left(W^{(h+1)}\right)^T W^{(h+1)}$ as implying alignment of the left and right singular spaces of $W^{(h)}$ and $W^{(h+1)}$. This property allows the simplification of the product $W^{(h+1)}W^{(h)}$, which in turn allows one to derive bounds for the evolution of the end-to-end matrix and then prove that the loss decreases at a particular rate.

Even though for ND we cannot write the \say{conserved} quantity as a total derivative it is still possible to prove an approximate balanced condition. Starting with \eqref{eq:linearconserved} we can get for ND that
\begin{equation}\label{eq:NDsecondderivative}
\frac{d^2}{dt^2} \left[\frac12 W^{(h)}\left(W^{(h)}\right)^T - \frac12 \left(W^{(h+1)}\right)^TW^{(h+1)} \right]
= \dot{W}^{(h)}\left(\dot{W}^{(h)}\right)^T - \left(\dot{W}^{(h+1)}\right)^T\dot{W}^{(h+1)}.
\end{equation}
Similarly to the $1$-homogeneous case, for ND we can see that the trace of the right hand side is related to the kinetic energy of the system
$$
\trace \left(\dot{W}^{(h)}\left(\dot{W}^{(h)}\right)^T - \left(\dot{W}^{(h+1)}\right)^T\dot{W}^{(h+1)} \right) = \|\dot{W}^{(h)}\|_F^2 - \|\dot{W}^{(h+1)}\|_F^2 \leq \|\dot{W}^{(h)}\|_F^2 + \|\dot{W}^{(h+1)}\|_F^2.
$$
Balance condition, $W^{(h)}\left(W^{(h)}\right)^T \approx \left(W^{(h+1)}\right)^TW^{(h+1)} $, is no longer satisified perfectly during training but it evolves in a controlled way.

As we already discussed, the approximate balancedness was a crucial building block in \citet{aroralinearnetworks} for showing that GD converges. We leave it as an interesting open problem to adapt the proof in \citet{aroralinearnetworks} using \eqref{eq:NDsecondderivative} as a tool to provide an approximate balance condition. It is worth mentioning that ND will most likely not converge to the optimal solution but oscillate around it. Therefore, the first challenge is to find a proper phrasing of such a potential \say{converge} result. The case of NAGF seems even more challenging.

\section{Data Symmetry Leads To Invariance of Weights}\label{sec:datasymmetries}

In this section we explore how symmetries in data influence the optimization algorithms. We focus on data that is rotational symmetric but the results hold more broadly.

\begin{table}
\centering
\begin{tabular}{SS} \toprule
    { }                           & {Conserved Quantities}    \\ \midrule \rule{0pt}{1ex}
    {General Dynamics}  & {$\left( \cnstdd \ddot{W}^{(1)} + \cnstd \ddot{W}^{(1)} \right)^T \cdot W^{(1)} - \left( W^{(1)} \right)^T \cdot \left( \cnstdd \ddot{W}^{(1)} + \cnstd \ddot{W}^{(1)} \right) = 0$}  \\ \rule{0pt}{3ex}
    {Newtonian Dynamics}  & {$(\ddot{W}^{(1)})^T W^{(1)} - (W^{(1)})^T \ddot{W}^{(1)} = 0$}  \\ \rule{0pt}{3ex} 
    {Nesterov's Accelerated Gradient Flow}           & {$(\ddot{W}^{(1)} + \frac{3}{t} \dot{W}^{(1)})^T W^{(1)} - (W^{(1)})^T (\ddot{W}^{(1)} + \frac{3}{t} \dot{W}^{(1)}) = 0  $}   \\ \rule{0pt}{3ex} 
    {Gradient Flow}  & {$(\dot{W}^{(1)})^T W^{(1)} - (W^{(1)})^T \dot{W}^{(1)} = 0$}  \\ 
    \bottomrule
\end{tabular}

\caption{\label{tab:rotational}Conserved quantities due to rotational symmetry of data.}
\end{table}

Note that symmetries in training datasets if not already present are often enforced by data augmentation. For instance for the image classification tasks it's common to augment the data by adding \say{all} possible translations or rotations of the training images. This makes the datasets invariant to these transformations.

\paragraph{Symmetry.} Assume that the training dataset $S = \left\{\left(\x^{(i)},\y^{(i)}\right)\right\}_{i = 1}^n \subset \R^{d_x} \times \R^{d_y}$ is rotational symmetric. More formally, assume that for every $Q \in \text{SO}(d_x)$ (recall that it is the group of rotations of $\R^{d_x}$) we have that $Q S = S$. Observe that the loss function $\loss$ maps datasets and predictions to real numbers, that is it is of the form $\loss(\w,S) \in \R$. 
Now let $Q \in \text{SO}(d_x)$ and define $X : \R^m \rightarrow \R^m$ as the following mapping of the space of parameters:
$$
X \left( \left(W^{(1)}, \dots, W^{(K)} \right) \right) := \left(W^{(1)} \cdot Q, W^{(2)}, \dots, W^{(K)} \right).
$$
Then we have
\begin{align*}
\loss(\w, S)
&= \loss(X(\w), Q^{-1} S) && \text{As $W^{(1)} Q Q^{-1} x = W^{(1)} \x$ for every $\x$} \\
&= \loss(X(\w), S) &&\text{As $S$ is $Q$-symmetric},
\end{align*}
which means that $\loss$ is invariant under transformation $X$.


\paragraph{Generator.} To find the generators corresponding to these transformations we resort to Lie theory as it is no longer this simple to do in the \say{pedestrian} way. It was easy in the $2$ dimensional case (see Example~\ref{exm:so2generator}) but becomes more involved in higher dimensions. For readers not interested in the derivation the collection of generators corresponding to rotations is a collection of linear transformations of $\R^m$ $\{\dx_{P} \ | \ P \in \text{Mat}(d_x), P + P^T = 0 \}$, where 
\begin{equation}\label{eq:generatorrotations}
\dx_{P} \left( \left(W^{(1)}, \dots, W^{(K)} \right) \right) := \left(W^{(1)} \cdot P, 0, \dots, 0 \right) .
\end{equation}

For derivation of generators in terms of Lie theory we refer the reader to Appendix~\ref{apd:lierotational}.

\paragraph{Conserved Quantity.}
To simplify the presentation we first give invariances for the case where $d_x = 3$ and the NN has only one layer and the output is a scalar (a.k.a. linear regression $\langle \w,\x \rangle$). Only after this introduction will we discuss the general result.

\subsection{Linear regression for ND}\label{sec:linearregression}

We consider a model $\langle \w, \x \rangle$, where $\w,\x \in \R^3$. Considering three generators of the $\mathfrak{so}(3)$ algebra (recall that it is an algebra of skew-symmetric matrices) 
$$\dx_1 = \begin{pmatrix}
0 & 0 & 0\\
0 & 0 & 1 \\
0 & -1 & 0
\end{pmatrix}, \dx_2 = \begin{pmatrix}
0 & 0 & -1\\
0 & 0 & 0 \\
1 & 0 & 0
\end{pmatrix}, \dx_3 = \begin{pmatrix}
0 & 1 & 0\\
-1 & 0 & 0 \\
0 & 0 & 0
\end{pmatrix},$$
we get the following equations 
\begin{align*}
0 
&= \langle E(t, \w, \wdot, \wddot), \dx_i(\w) \rangle \\
&= \langle \cnstdd\wddot + \cnstd\wdot, \dx_i(\w) \rangle,
\end{align*}
which written in coordinates give
\begin{align*}
(\cnstdd \ddot{w}_2 + \cnstd \dot{w}_2) w_3 - (\cnstdd \ddot{w}_3 + \cnstd \dot{w}_3) w_2 = 0 \\
(\cnstdd \ddot{w}_3 + \cnstd \dot{w}_3) w_1 -
(\cnstdd \ddot{w}_1 + \cnstd \dot{w}_1) w_3 = 0
\\
(\cnstdd \ddot{w}_1 + \cnstd \dot{w}_1) w_2 - (\cnstdd \ddot{w}_2 + \cnstd \dot{w}_2) w_1 = 0 
\end{align*}
If we denote by $\pmb{v}$ the angular momentum of $\w$ around the origin we can rewrite above to get the following equation
\begin{equation}\label{eq:rotationallinear}
\cnstdd \dot{\pmb{v}} + \cnstd \pmb{v} = 0.
\end{equation}


\begin{figure}
\centering
\begin{minipage}{.35\textwidth}
  \centering
  \includegraphics[width=.7\linewidth]{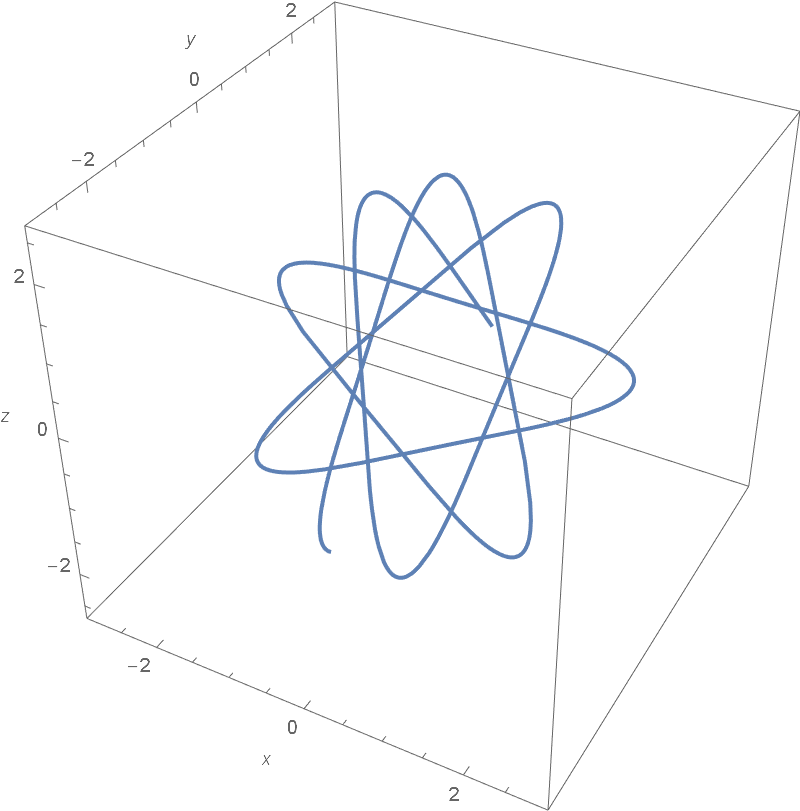}
\end{minipage}%
\begin{minipage}{.35\textwidth}
  \centering
  \includegraphics[width=.8\linewidth]{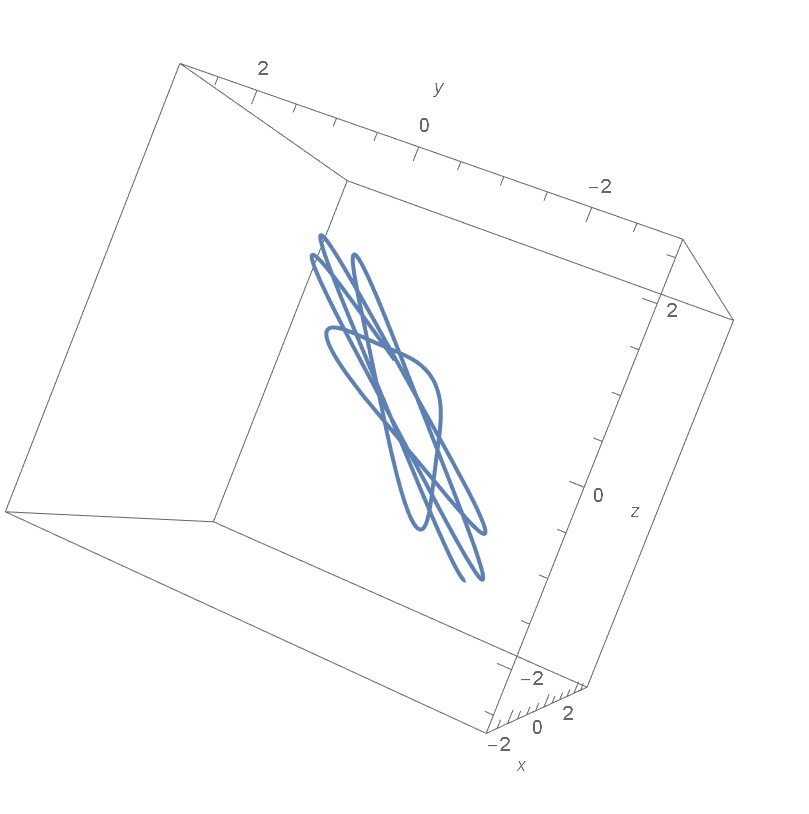}
\end{minipage}
\caption{Trajectory of ND for rotational symmetric distribution.}
\label{fig:NDrotational}
\end{figure}

\paragraph{Implications.} Recall that ND corresponds to the choice $\cnstdd = 1, \cnstd = 0$. Equation (\ref{eq:rotationallinear}) then implies that $\dot{\pmb{v}}=0$, i.e., if we run ND on data that is rotational symmetric then the angular momentum stays preserved. This means that if we initialize the system with $\wdot(0) = 0$ (which implies $0$ angular momentum) then during training only the radial component is changed, i.e.,
$$
\w(t) = \alpha(t) \w(0)
$$
for some function $\alpha : \R \rightarrow \R$. If instead we initialize $\wdot(0) \neq 0$ and we track $\w$ radially projected onto the unit sphere then it will \say{rotate} around the origin with motion that is determined purely by the initialization. Similarly, only the radial component is really influenced by learning.

If we use the parameters $\cnstdd = 0, \cnstd = 1$, we see that the angular momentum is zero for GF. For NAGF the corresponding equation is
\begin{equation}\label{eq:rotationalconservedNAGF}
\dot{\pmb{v}} + \frac{3}{t} \pmb{v} = 0.
\end{equation}
We observe that \eqref{eq:rotationallinear} guarantees that the direction of $\pmb{v}$ stays preserved in general and only the magnitude changes. This guarantees that trajectories of all optimization algorithm we consider are contained in a plane orthogonal to $\pmb{v}$.

\paragraph{Experiments.} We include plots of experiments on artificial data in Figures~\ref{fig:NDrotational}, \ref{fig:NAGFrotational}, \ref{fig:GFrotational} and \ref{fig:lossrotational}. Figure~\ref{fig:NDrotational} and \ref{fig:NAGFrotational} both depict one trajectory from different angles. The loss to be optimized was $\loss(\w) = (\|\w\|^2 - 1)^2$. The initialization was as follows: for GF $\w = (0.1,0.1,0.1)^T$ and for ND and NAGF $\w = (0.5,0.5,0.5)^T, \wdot = (-2,1,1)^T$ \footnote{Formally, for NAGF we cannot initialize $\wdot$ to a non-zero vector at $t = 0$ as \eqref{eq:rotationalconservedNAGF} implies that $\lim_{t \rightarrow 0}\|\pmb{v}(t)\|^2 = \infty$. In experiments we initialize $\wdot(\delta) = (-2,1,1)^T$ for a small positive $\delta$. In general the initialization for NAGF should be $\wdot(0) = 0$.}. We can see that the trajectory for GF is a simple line segment. This is consistent with angular momentum being zero. The loss for GF (Figure~\ref{fig:lossrotational}, right) is strictly decreasing as expected. 

For ND we can see that the trajectory \say{rotates} around the origin in a plane defined by the equation $y - z = 0$. This is consistent with the conserved angular momentum law as the angular momentum at initialization is equal to $(0,3/2,-3/2)$. This vector is normal to the plane of the orbit of the trajectory which is exactly the plane defined by the equation $y = z$. It is interesting to note the cyclical behavior of the loss function (Figure~\ref{fig:lossrotational}, left), which is related to the fact that the Hamiltonian $\frac12 \|\wdot\|^2 + \loss(\w)$ stays conserved. 

For NAGF we also see that the trajectory stays in the plane defined by the equation $y - z = 0$. We also see \say{rotation} around the origin but in contrast to ND we see dampening both in the trajectory, which converges to a point on the unit sphere $\|\w\|^2 = 1$ and in the loss function (Figure~\ref{fig:lossrotational}, middle).

\begin{figure}
\centering
\begin{minipage}{.35\textwidth}
  \centering
  \includegraphics[width=.7\linewidth]{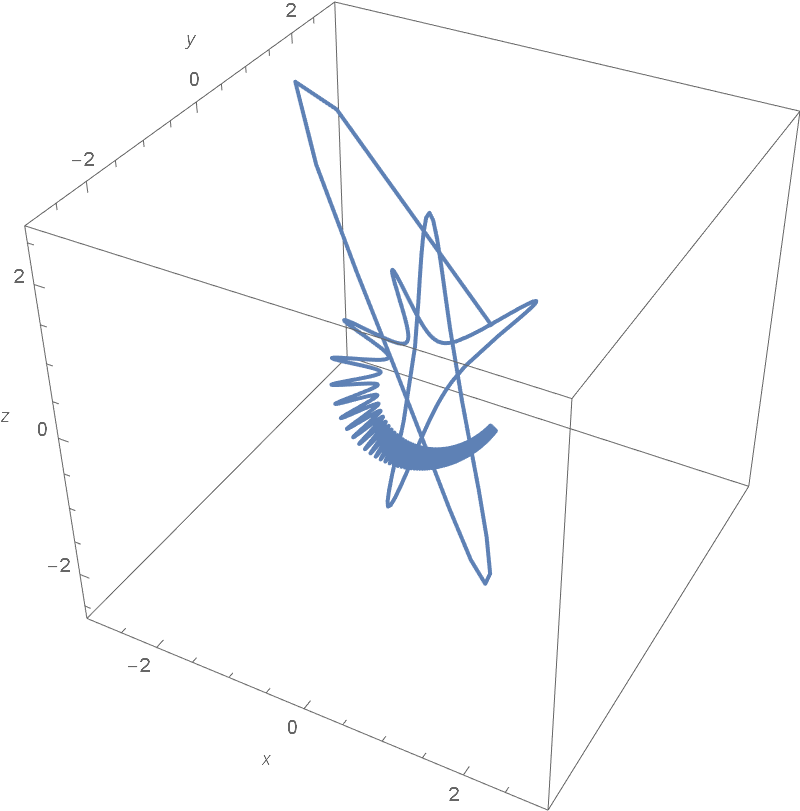}
  \label{fig:test1}
\end{minipage}%
\begin{minipage}{.35\textwidth}
  \centering
  \includegraphics[width=.8\linewidth]{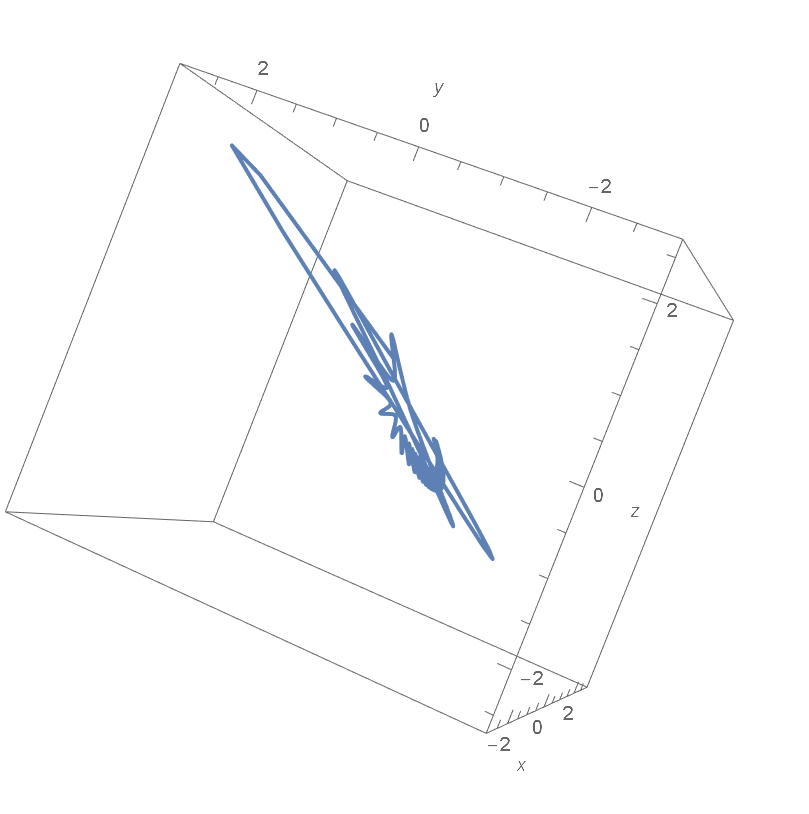}
  \label{fig:test2}
\end{minipage}
\caption{Trajectory of NAGF for rotational symmetric distribution.}
\label{fig:NAGFrotational}
\end{figure}

\begin{figure}
\centering
\begin{minipage}{.35\textwidth}
  \centering
  \includegraphics[width=.7\linewidth]{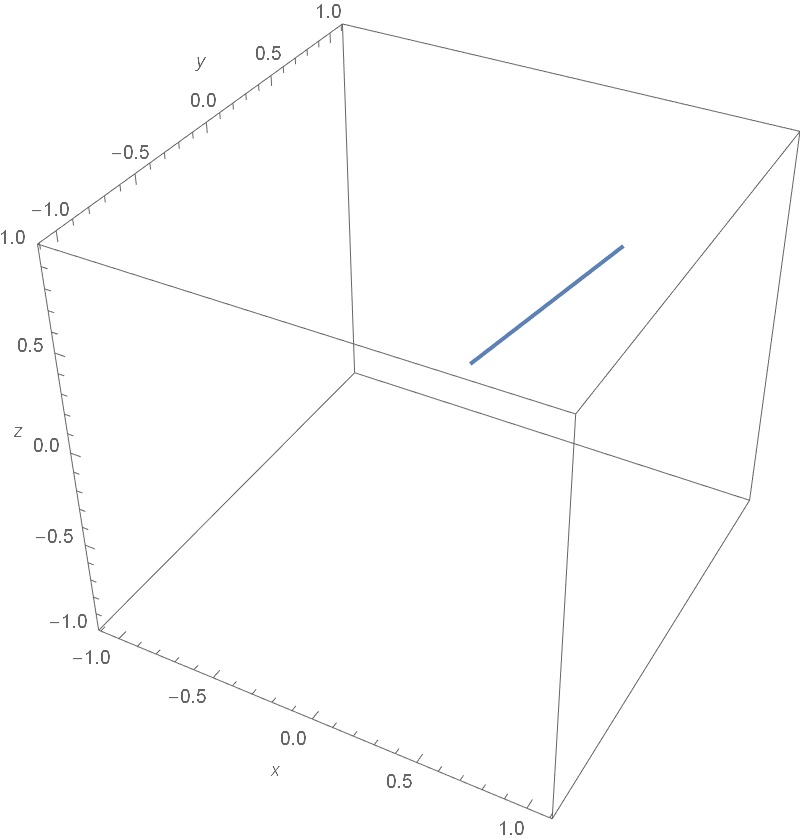}
\end{minipage}%
\caption{Trajectory of GF for rotational symmetric distribution.}
\label{fig:GFrotational}
\end{figure}

\begin{figure}[h]
\centering

 \subfloat[ND]{\includegraphics[width=0.33\columnwidth]{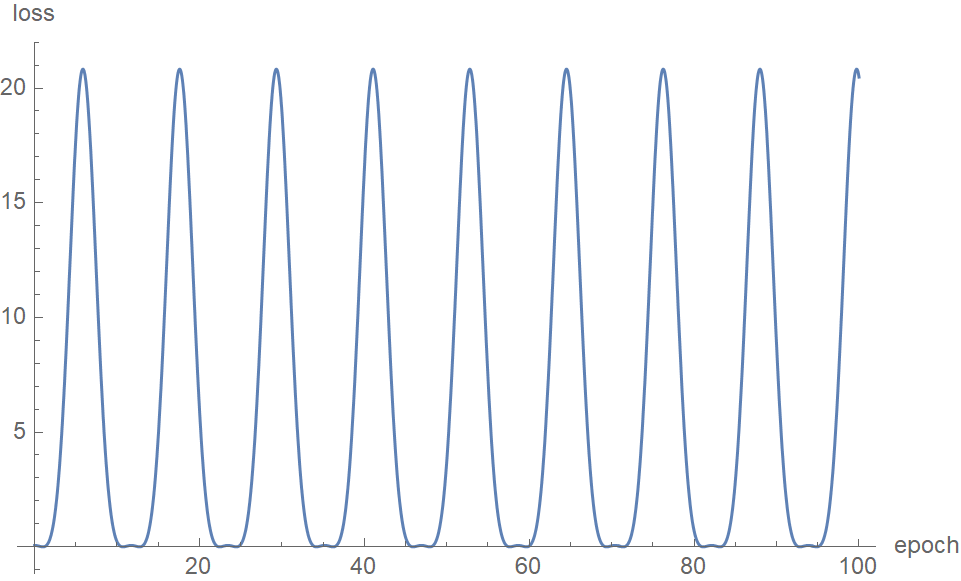}}
 \subfloat[NAGF]{\includegraphics[width=0.33\columnwidth]{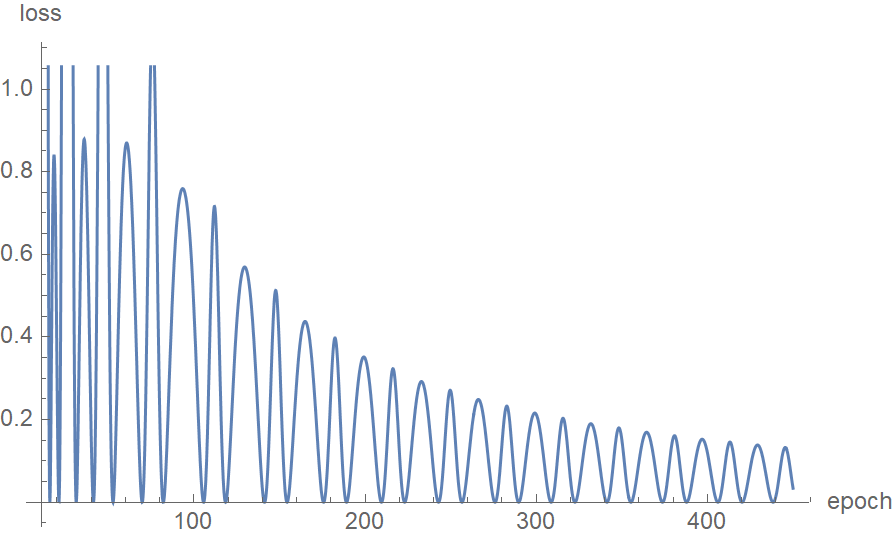}}
 \subfloat[GF]{\includegraphics[width=0.33\columnwidth]{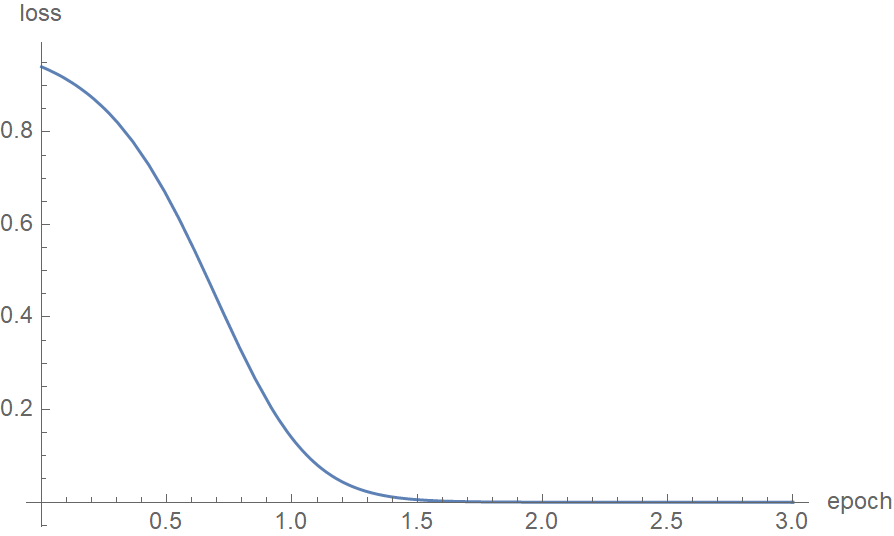}}
 
 \caption{Loss for rotational symmetric data for different optimization algorithms.}\label{fig:lossrotational}
\end{figure}

\subsection{Neural networks}\label{sec:rotationalneural}

After considering this toy example we proceed to discuss the general case.

For every $P \in \mathfrak{so}(d_x)$ applying \eqref{equ:extensionconserved} for the dynamic \eqref{equ:ode} where the generator $\dx$ is defined according to \eqref{eq:generatorrotations} we get

\begin{align*}
0 
&=\langle E(t, \w, \wdot, \wddot), \dx(\w) \rangle \\
&= \langle \cnstdd \wddot + \cnstd \wdot, \dx(\w) \rangle \\
&= \langle \cnstdd \ddot{W}^{(1)} + \cnstd \dot{W}^{(1)}, W^{(1)} \cdot P \rangle \\
&= \trace\left(P \cdot \left( \cnstdd \ddot{W}^{(1)} + \cnstd \dot{W}^{(1)} \right)^T \cdot W^{(1)}\right). 
\end{align*}


The following lemma helps us to get rid of the trace in the previous expression.
\begin{lemma}\label{lem:rotationtrace}
Let $n \in \N$ and $X \in \text{Mat}(n)$. If 
$$
\trace(P X) = 0
$$
for every $P \in \mathfrak{so}(n)$ then $X$ is symmetric.
\end{lemma}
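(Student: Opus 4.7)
The plan is to split $X$ into its symmetric and skew-symmetric parts and show that the hypothesis forces the skew-symmetric part to vanish. Write $X = X_s + X_a$ where $X_s = \tfrac12(X + X^T)$ is symmetric and $X_a = \tfrac12(X - X^T)$ is skew-symmetric. The goal becomes: show $X_a = 0$.

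First I would observe that for any $P \in \mathfrak{so}(n)$ the trace $\trace(P X_s)$ is automatically zero. Using the cyclic property of the trace and the transpose identity $\trace(M) = \trace(M^T)$:
\begin{equation*}
\trace(P X_s) = \trace((P X_s)^T) = \trace(X_s^T P^T) = \trace(X_s(-P)) = -\trace(P X_s),
\end{equation*}
so $\trace(P X_s) = 0$. Combining this with the hypothesis $\trace(PX) = 0$ for every $P \in \mathfrak{so}(n)$, we conclude that $\trace(P X_a) = 0$ for every $P \in \mathfrak{so}(n)$.

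The second step is to apply this identity with the specific choice $P = X_a$, which is admissible because $X_a$ is itself skew-symmetric. Using $X_a^T = -X_a$, we compute
\begin{equation*}
0 = \trace(X_a X_a) = -\trace(X_a^T X_a) = -\|X_a\|_F^2,
\end{equation*}
so $X_a = 0$ and therefore $X = X_s$ is symmetric, as claimed.

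I do not expect a genuine obstacle here; the only minor point to be careful about is justifying that the symmetric part contributes zero to the trace against any skew-symmetric matrix, which is the standard orthogonality of the symmetric and skew-symmetric subspaces under the Frobenius inner product. The structure of the argument mirrors that of Lemma~\ref{lem:tracetozero}: one uses the freedom to choose $P$ cleverly (there, any matrix; here, any skew-symmetric matrix) to reduce the trace condition to the vanishing of a Frobenius norm.
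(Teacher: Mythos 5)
Your proof is correct, but it takes a different route from the paper's. The paper tests the hypothesis against the explicit basis $\{P_{i,j}\}_{i<j}$ of $\mathfrak{so}(n)$ (the matrix with $1$ in entry $(i,j)$, $-1$ in entry $(j,i)$, and zeros elsewhere) and reads off the entry-wise identities $X[i,j] = X[j,i]$ directly. You instead split $X$ into its symmetric and skew-symmetric parts, note that the symmetric part is orthogonal to every skew-symmetric $P$ under the trace pairing, and then kill the skew part with the single test matrix $P = X_a$, yielding $\|X_a\|_F^2 = 0$. Both arguments are complete; yours is coordinate-free and, as you observe, structurally parallel to the paper's proof of Lemma~\ref{lem:tracetozero} (choose the one $P$ that turns the trace condition into a vanishing Frobenius norm), whereas the paper's basis computation makes explicit that the $\binom{n}{2}$ independent generators of $\mathfrak{so}(n)$ give exactly the $\binom{n}{2}$ constraints needed, which is the counting the paper leans on when it remarks that \eqref{eq:cnsrvdrotaitonal} captures all relations deducible from the symmetry.
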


\begin{proof}
For $i,j \in \{1,\dots,n\}, i<j$ let $P_{i,j} \in \text{Mat}(n)$ be defined as
$$
P_{i,j}[k,l] := \begin{cases}
        1 & \text{if } (k,l) = (i,j)\\
        -1 & \text{if } (k,l) = (j,i) \\
        0 &\text{otherwise}
        \end{cases}. 
$$
Let $i,j \in \{1,\dots,n\}, i<j$. Observe that $P_{i,j} \in \mathfrak{so}(n)$ as it is skew-symmetric. Thus by assumption of the lemma $\trace(P_{i,j} X)$, which implies that $X[j,i] - X[i,j] = 0$. This proves that $X = X^T$. 
\end{proof}
then we arrive at the following conserved quantity
\begin{equation}\label{eq:cnsrvdrotaitonal}
\left( \cnstdd \ddot{W}^{(1)} + \cnstd \dot{W}^{(1)} \right)^T \cdot W^{(1)} - \left( W^{(1)} \right)^T \cdot \left( \cnstdd \ddot{W}^{(1)} + \cnstd \dot{W}^{(1)} \right) = 0
\end{equation}
which is equivalent to saying that $\left( \cnstdd \ddot{W}^{(1)} + \cnstd \dot{W}^{(1)} \right)^T \cdot W^{(1)}$ is symmetric. This result is the template for entries in Table~\ref{tab:rotational}. It is worth noting that \eqref{eq:cnsrvdrotaitonal} captures all relations we can deduce from the symmetry because trace of a product of symmetric and skew-symmetric matrices is zero.

\paragraph{Implications.} For ND (which corresponds to $\cnstdd = 1, \cnstd = 0$) we can write \eqref{eq:cnsrvdrotaitonal} as a total derivative and get
\begin{equation}\label{eq:rotationalND}
\left( \dot{W}^{(1)} \right)^T \cdot W^{(1)} - \left( W^{(1)} \right)^T \cdot \dot{W}^{(1)} = \text{const}.
\end{equation}
This can be interpreted as a generalization for conservation of angular momentum from Section~\ref{sec:linearregression}. For instance if $W^{(1)} \in \text{Mat}(d_1,3)$ and we treat rows $W^{(1)}[i,:]$ as positions of particles in $\R^3$ then \eqref{eq:rotationalND} translates to conservation of angular momentum for the whole system of these $d_1$ particles. When $d_x > 3$ the conservation law becomes the conservation of the higher dimensional generalization of angular momentum. We leave it as an open problem to analyze how it influences the optimization for the higher layers.

It is an interesting research direction to analyze how different symmetries influence optimization. As we mentioned, for image classification tasks it is common to augment the training data by all translations and/or rotations. In \citet{cntk} the following interesting equivalence was shown. Convolutional Neural Tangent Kernels (CNTKs) with a Global Average Pooling (GAP) layer are equivalent to CNTKs without GAP but with data augmentation (all translations). A further work that  explores how symmetry of data influences optimization is \citet{montanariMeanField}. In there the authors consider two layer NN in the infinite width limit. In this limit one can think of the weights of the network as a distribution. They then show that if the data is rotationally symmetric then this distribution on weights is also rotationally symmetric. We leave it as an open problem to cast these phenomena in our framework.

\section{Time Symmetry Leads To Neural Tangent Kernel}

\paragraph{Symmetry.} Note that the Lagrangian for ND does not depend on time.

\paragraph{Conserved quantity.} We are therefore dealing with the special case of Noether's Conservation Law in Lemma~\ref{lem:noether}, where the Hamiltonian $\Lag - \langle \xdot, \frac{\partial \Lag}{\partial \xdot} \rangle$ stays conserved. This implies that $\frac12 \|\wdot\|^2 + \loss(\w)$ is a conserved quantity for ND. 

\paragraph{Implications.} Let us now show how this conservation law easily leads to a key property for proving a version of Neural Tangent Kernel (NTK). 

\textit{Informally:} In the sequel, let us refer to the conserved quantity $\frac12 \|\wdot\|^2 + \loss(\w)$ as an \say{energy}. No mathematical property is implied by this language. Since $\frac12 \|\wdot(t=0)\|^2=0$ (this is how we initialize ND), the initial energy of the system is $\loss(\w(0))$ and at most that much energy can potentially be transferred to $\wdot$. As the width of the network tends to infinity, i.e., the number of parameters tends to infinity, the average velocity of the parameters hence decreases. This implies that as the width increases the weights move less and less as their velocities tend to zero during training. This is a key property for showing NTK-like result. 

\textit{More formally:} Noether's Theorem tells us that the energy $\frac{1}{2} \|\wdot \|^2 + \loss(\w)$ stays constant during training. Thus if we initialize $\wdot(0) = 0$ we can conclude that
$$
\|\wdot \|^2 = \sum_{i=1}^m |\dot{w}_i|^2 \leq 2 \loss(\w(0)).
$$ 
From the inequality between arithmetic and quadratic means we further have
\begin{equation}\label{eq:avgvelocitysmall}
\frac{1}{m} \sum_{i=1}^m |\dot{w}_i| \leq \sqrt{ \frac{2 \loss(\w(0))}{m} } \leq O \left(\frac{1}{\sqrt{m}} \right).    
\end{equation}
Note that we can consider $\loss(\w(0))$ as a constant independent of the network width. This is true for popular initialization schemes as e.g., the LeCun initialization. In \citet{chizat19lazy} it is suggested to use \say{unbiased} initializations such that $f_{\w(0)} \equiv 0$. This is often achieved by replicating every unit with two copies of opposite signs. Such initialization would guarantee that $\loss(\w(0))$ is fixed as well.

From \eqref{eq:avgvelocitysmall} we deduce that the average velocity decreases with $m$. In the proofs of NTK a more detailed version of the above is often needed.
\begin{equation}\label{assum:perweightbound}
\max_{i \in [m]} |\dot{w}_i| \leq O \left( \frac{1}{\sqrt{m}} \right).
\end{equation}
This is a key component in the NTK result that allows to bound movement of the weights. Our result doesn't readily provide this more detailed inequality. But it is important to note that our inequality holds with only minor assumptions on the distribution of weights (think of \say{unbiased} initializations) and that we require no assumptions on the architecture of the network whereas in the NTK literature it is typically assumed that weights are initialized with normal distributions. We leave it as an open question whether there are other symmetries, potentially stemming from i.i.d. initializations of the weights or permutation symmetry that could lead to a variant of \eqref{assum:perweightbound}. 
If so, then the approach via symmetries can be viewed as a semi-automatic way of obtaining a key ingredient for NTK-like results. Moreover it woudl hold for more general distributions of weights. What follows is a sketch of a proof of NTK for ReLU networks with scalar outputs trained with ND using the quadratic loss.

\textit{Proof Sketch.}
Integrating \eqref{assum:perweightbound} over time\footnote{We treat learning time as a constant as was also done in \citet{jackotntk}.} we get:
\begin{equation}\label{eq:weightsdontmovemuch}
\max_{i \in [m]}|w_i(T) - w_i(0)| \leq O \left(\frac{1}{\sqrt{m}} \right) \text{.}
\end{equation}

Let us consider the quadratic loss
$$
\loss(\w) = \frac{1}{2} \sum_{i=1}^n (\y^{(i)} - f_{\w}(\x^{(i)}) )^2. 
$$
Recall that we use the ND dynamic \eqref{eq:odeND}. Hence the evolution of weights is governed by
$$
\wddot = \frac{d^2 \w(t)}{d t^2} = - \nabla \loss(\w(t)) = - \sum_{i=1}^n (\y^{(i)} - f_{\w(t)}(\x^{(i)})) \cdot \frac{\partial f_{\w(t)}(\x^{(i)})}{\partial \w}.
$$
We can use this to see how the predictions for inputs $\x^{(i)}$ change during training,
\begin{align}
\frac{d^2 f_{\w(t)}(\x^{(i)})}{dt^2}
&= \frac{d}{dt} \left( \left\langle \frac{\partial f_{\w(t)}(\x^{(i)})}{\partial \w} , \frac{d \w(t)}{dt} \right\rangle \right) \nonumber \\
&= \left\langle \frac{\partial f_{\w(t)}(\x^{(i)})}{\partial \w} , \frac{d^2 \w(t)}{dt^2} \right\rangle + \left\langle \frac{\partial^2 f_{\w(t)}(\x^{(i)})}{\partial \w^2} , \left(\frac{d \w(t)}{dt} \right)^2 \right\rangle \nonumber \\
&= - \sum_{j=1}^n (\y^{(j)} - f_{\w(t)}(\x^{(j)})) \cdot \left\langle \frac{\partial f_{\w(t)}(\x^{(i)})}{\partial \w}, \frac{\partial f_{\w(t)}(\x^{(j)})}{\partial \w} \right\rangle + \left\langle \frac{\partial^2 f_{\w(t)}(\x^{(i)})}{\partial \w^2} , \left(\frac{d \w(t)}{dt} \right)^2 \right\rangle \nonumber \\
&= - \sum_{j=1}^n (\y^{(j)} - f_{\w(t)}(\x^{(j)})) \cdot \left\langle \frac{\partial f_{\w(t)}(\x^{(i)})}{\partial \w}, \frac{\partial f_{\w(t)}(\x^{(j)})}{\partial \w} \right\rangle, \label{eq:derivation}
\end{align}
where in the last transition we used the fact that for ReLU NNs
$$
\frac{\partial^2 f_{\w(t)}(\x^{(i)})}{\partial \w^2} = 0.
$$
Here we have crucially used the fact that the second derivative of $ x \mapsto \max(0,x)$ is $0$. 
Using \eqref{eq:weightsdontmovemuch} and i.i.d. initializations of weights one can argue that
$$
\left\langle \frac{\partial f_{\w(0)}(\x^{(i)})}{\partial \w}, \frac{\partial f_{\w(0)}(\x^{(j)})}{\partial \w} \right\rangle \approx \left\langle \frac{\partial f_{\w(t)}(\x^{(i)})}{\partial \w}, \frac{\partial f_{\w(t)}(\x^{(j)})}{\partial \w} \right\rangle.
$$
Denoting $H_{i,j} := \left\langle \frac{\partial f_{\w(0)}(\x^{(i)})}{\partial \w}, \frac{\partial f_{\w(0)}(\x^{(j)})}{\partial \w} \right\rangle$ we can thus write
$$
\ddot{f}_{\w(t)}(\x^{(i)}) \approx - \sum_{j = 1}^n (\y^{(j)} - f_{\w(t)}(\x^{(j)})) \cdot H_{i,j} \text{.}
$$
and denoting $\uu(t) := \left(f_{\w(t)}(\x^{(1)}), f_{\w(t)}(\x^{(2)}), \dots, f_{\w(t)}(\x^{(n)}) \right)^T$ we get
\begin{equation}\label{eq:NDntk}
\ddot{\uu} \approx - H \cdot (\y - \uu).
\end{equation}

We explain now how \eqref{eq:NDntk} corresponds to kernel methods. First we explain how a different dynamic $\dot{\uu} = - H \cdot (\y - \uu)$ corresponds to the kernel regression with GF. Let $\Phi$ be the mapping\footnote{For this proof sketch we assume for simplicity that $\Phi$ maps to a finite dimensional space.} such that $\langle \Phi(\x), \Phi(\x') \rangle$ is the kernel between $\x,\x'$. Then kernel regression can be phrased as learning a linear model in features $\Phi$. That is, the model is $\langle \alphap , \Phi(\x) \rangle$, where $\alphap$ are the weights to be learned. If we consider the square loss we get
$$ 
\loss(\alphap) = \frac{1}{2} \sum_{i=1}^n (\y^{(i)} - \langle \alphap, \Phi(\x^{(i)}) \rangle)^2 .
$$
Now if we optimize $\loss(\alphap)$ using GF the evolution of $\alphap$'s is governed by
$$
\dot{\alpha}_j = - \frac{\partial \loss}{\partial \alpha_j} = - \sum_{i=1}^n (\y^{(i)} - \langle \alphap, \Phi(\x^{(i)}) \rangle) \cdot \Phi(\x^{(i)})_j.
$$
Observing that $\uu(t) = (\langle \alphap(t), \Phi(\x^{(1)}) \rangle, \dots, \langle \alphap(t), \Phi(\x^{(n)}) \rangle)^T$, we obtain
$$
\dot{\uu} = - H \cdot (\y - \uu),
$$
where, as before $H_{i,j} = \langle \Phi(\x^{(i)}), \Phi(\x^{(j)}) \rangle$. Now if we consider the same loss function but we train using the ND dynamic:
$$
\ddot{\alpha}_j = - \frac{\partial \loss}{\partial \alpha_j} = - \sum_{i=1}^n (\y^{(i)} - \langle \alphap, \Phi(\x^{(i)}) \rangle) \cdot \Phi(\x^{(i)})_j
$$
we arrive at \eqref{eq:NDntk}.

This means that NNs with ReLU activation functions trained with ND are equivalent (in the infinite width limit) to kernel methods trained with ND.

\hfill$\square$

\begin{remark}
Observe that even though the Lagrangian for NAGF \eqref{eq:odeNAGF} explicitly depends on time the bound from \eqref{eq:avgvelocitysmall} still holds for NAGF. To see that one needs only to recall \eqref{eq:nesterovenergydecreases}, which guarantees that the energy of the system decreases.  

It is also possible to get a similar result for GF. As explained in Section~\ref{sec:learning} GF doesn't have a corresponding Lagrangian but can be seen as a \say{massless} strong friction limit. One can use this to derive the following bound
$$
\frac{1}{m} \sum_{i=1}^m |w_i(T) - w_i(0)| \leq O \left( \frac{1}{\sqrt{m}} \right).
$$
\end{remark}

\section{Conclusions}
In the literature on NNs symmetries appear frequently and in crucial roles. Their use however was mostly implicit and sometimes appeared coincidental. We introduce a framework in order to analyze symmetries in a systematic way. We show how a chosen optimization algorithm interacts with the symmetric structure, giving rise to conserved quantities. These conservation laws factor out some degrees of freedom in the model, leading to more control on the path taken by the algorithm.

The conserved quantities we obtain often have the form of balance-like conditions. E.g., for the case of homogeneous activation functions this allows one to bound Frobenius norms of weights matrices. This bound effectively guarantees that weights remain bounded during training. This boundedness is a crucial property for showing convergence of gradient based methods. For linear activation functions symmetry leads to a balance condition on consecutive layers that allows to analytically simplify the expression for the end-to-end matrix. This in turn makes it possible to guarantee progress in every GD step. Data augmentation naturally leads to a symmetry that in turn restricts trajectories of optimization algorithms. We left it as an open problem to analyze implications of this restriction. Lastly the time invariance of the optimization algorithm  leads to a bound on the average movement of weights during training. As discussed, this is intimately connected to NTK-like results and potentially can lead to extensions, shedding new light on the connection between NNs and kernel methods.

There are many directions for future work. 

\paragraph{Further implications.} First, it will be interesting to see what further implications can be derived from the conserved quantities discussed in the paper. For instance, it might be possible to prove convergence results for NAGF for deep linear networks. 

\paragraph{Discreet symmetries.} The symmetries we consider in this work are continuous. A natural question is if it is possible to extend the framework to discrete symmetries (e.g. permutation symmetry of neurons). The physics literature has not much to offer on this question. 

\paragraph{Generalization bounds.} Ultimately what we care about is the performance of the model on real distributions. Do conservation laws play any role for the question of generalization?

\paragraph{Discreet algorithms.} Our analytical results hold for continuous dynamics (ODEs) but the algorithms used in practice are discrete. We demonstrated in the experiments that the derived quantities stay approximately conserved also in the discrete setting. It would be useful to elucidate the robustness of the results under discretization.

\paragraph{GD vs SGD.} The relationship between symmetries and the optimization algorithm are clearest if we consider GD. This is the point of view we took. But in practice the learning algorithm is typically SGD. To what degree do the conservation laws still hold in this case? As a simple preliminary result we note that conserved quantities we derived due to symmetries of the activation functions also hold for SGD (to the same level as they carry over from GF to GD). This is because the symmetries leave not only the loss invariant but also the prediction function itself.  It is an interesting question whether similar results hold more broadly.

\section{Acknowledgements}
We would like to thank Nicolas Macris for introducing us to the wonderful world of Noether's Theorem.

\bibliography{references}

\appendix

\section{From Invariances to Generators - Lie Groups and Lie Algebras}\label{apd:lie}

Recall that in general (see Definition~\ref{def:generatorsandinvariance}) we are interested in transformations of time and space
\begin{align*}
&t \mapsto t' = T(t, \x, \epsilon), \\
&\x \mapsto \x' = X(t, \x, \epsilon).
\end{align*}
As explained in Section~\ref{sec:extensions}, for the most part we focus on transformations that leave only the potential $\pot(\x)$ invariant. This means that transformations simplify to
$$
\x \mapsto \x' = X(\x,\epsilon).
$$
Assume that we have found a transformation of the form $\dx(t,\x) = \dx(\x)$ (a transformation that does not depend explicitly on time). Think of $\dx(\x)$ as a vector field $\dx : \R^d \xrightarrow{} \R^d$, i.e., $\dx(\x)$ associates to every point $\x$ in \say{space} the vector $\dx(\x)$. Then for every point $\x$ in space, $X(\x,\epsilon)$ defines a curve that is parametrized by $\epsilon$, and this curve can be described in terms of a differential equation involving $\dx$:
\begin{align} 
&X(\x,0) = \x, \nonumber \\
& \frac{d}{d\epsilon} X(\x,\epsilon) = \dx(X(\x,\epsilon)). \label{equ:integralcurve}
\end{align}
As mentioned earlier we would like to approximate $X$ to the first order by $\dx$,
\begin{equation}\label{eq:linearize}
X(\x, \epsilon) = \x + \dx(\x) \epsilon + O(\epsilon^2).
\end{equation}
A mathematical theory that analyzes continuous symmetries and their relations with generators is the one of Lie groups -- smooth manifolds with a group structure. The key idea in the theory is to replace the \textit{global} object (for us $X$) by it's \textit{local} or linearized version (for us $\dx$), where the relation between the two is given by \eqref{eq:linearize}. The localized version is known as the Lie algebra or as the infinitesimal group. This theory is exactly what we need to go from symmetries to generators. For an in depth treatment of Lie theory we refer the reader to \citet{knapp2002lie}.

Typical examples of transformations that might leave $\pot(\x)$ invariant are translations, see  e.g., Example~\ref{ex:spatialsymmetry}, scalings,  or rotations. In general, there can be more than one particular transformation for a given setup. E.g., think of rotations by unitary matrices. It is useful to think of the the whole {\em collection} of such transforms.  Each such collection will correspond to a Lie group $G$ that acts on the points $\x$. In other words, $G$ acts on $\R^d$. In more detail, $g \in G$ corresponds to some transformation $X(\cdot, \epsilon)$, for some fixed $\epsilon$, in a smooth way. Let us look at some examples.

\begin{example}[$\text{SO}(2)$ Group]
Our running example for understanding Lie groups and Lie algebras will be the one of $\text{SO}(2)$ - the group of rotations of $\R^2$. Every $g \in \text{SO}(2)$ can be thought of as a matrix 
\begin{align}
\begin{pmatrix} \label{equ:sorotation}
\cos(\theta) & \sin(\theta) \\
-\sin(\theta) & \cos(\theta)
\end{pmatrix}
\end{align}
that acts on $\R^2$ by matrix multiplication  and corresponds to a rotation by an angle $\theta$ around the origin.
\end{example}

Given a transformation, how do we find the corresponding generators $\dx$? To do that one considers a linearization of $G$ around the identity $e$. For every $g \in G$ one thinks of it as a sequence of consecutive transformations by an infinitesimal parameter, that is expresses
\begin{equation}\label{eq:expandaroundidentity}
g = \lim_{\eta \xrightarrow{} 0} (e + \eta \cdot \dx_g)^{1/\eta},
\end{equation}
where $\dx_g$ is the sought for generator associated with $g$. How are the addition and multiplication by $\eta$ in \eqref{eq:expandaroundidentity} defined? For the cases we consider, the involved Lie groups have matrix representations and thus one can think of addition and multiplication as operations on matrices.
The general case is more involved and we refer the reader
to the relevant literature \citep{knapp2002lie}.

\begin{example}[$\text{SO}(2)$ Group Contd]
Consider our running example. We can express any $g \in \text{SO}(2)$ in the form  \eqref{eq:expandaroundidentity} as 
\begin{equation} \label{eq:generatorforrotation}
\begin{pmatrix}
\cos(\theta) & \sin(\theta) \\
-\sin(\theta) & \cos(\theta)
\end{pmatrix} = \lim_{\eta \xrightarrow{} 0} \left[ \begin{pmatrix}
1 & 0 \\
0 & 1
\end{pmatrix} + \eta \cdot \begin{pmatrix}
0 & \theta \\
-\theta & 0
\end{pmatrix} \right]^{1/\eta}.
\end{equation}
This identity comes from the fact that for small angles $\eta \theta$ we have $\cos(\eta \theta) \approx 1, \sin(\eta \theta) \approx \eta \theta$ and thus 
$$
\begin{pmatrix}
\cos(\eta \theta) & \sin(\eta \theta) \\
-\sin(\eta \theta) & \cos(\eta \theta)
\end{pmatrix} \approx \begin{pmatrix}
1 & \eta \theta \\
-\eta \theta & 1
\end{pmatrix}.
$$
A rotation by an angle $\theta$ can therefore be thought as $1/\eta$ rotations by an angle $\eta \theta$. Equation \eqref{eq:generatorforrotation} says that $\begin{pmatrix}
0 & \theta \\
-\theta & 0
\end{pmatrix}$ is the generator corresponding to the rotation (\ref{equ:sorotation}).
\end{example}
Once we found $\dx_g$ corresponding to $g$, Lie theory tells us that a vector field defined as
$$\x  \mapsto \dx_g (\x) $$
corresponds to $g$ in the following sense. For every $\x \in \R^d$ if one starts at $\x$, solves the differential equation (\ref{equ:integralcurve}) and travels along the integral curve of $\dx_g$ for time $1$ then one arrives exactly at the point $g \cdot \x$. 

\begin{example}[$\text{SO}(2)$ Group Contd]\label{exm:so2generator}
We can verify that the vector field corresponding to (\ref{equ:sorotation}) is defined exactly by the action of the corresponding generator $\dx_g$ on $\R^2$
\begin{equation}\label{eq:explicitrot}
\dx_g (\x) = \begin{pmatrix}
0 & \theta \\
-\theta & 0
\end{pmatrix} \cdot \x = \begin{pmatrix}
0 & \theta \\
-\theta & 0
\end{pmatrix} \cdot \begin{pmatrix}
x_1 \\
x_2
\end{pmatrix} = \theta \cdot \begin{pmatrix}
x_2 \\
-x_1
\end{pmatrix}.
\end{equation}
\end{example}
Lie theory guarantees that a collection $\{\dx_g \}_{g \in G}$ can be associated with the Lie algebra $\mathfrak{g}$ of $G$. Formally, Lie algebras are vector spaces with an operation called the \textit{Lie bracket}. The Lie algebra associated with $G$ can be thought of as a tangent space to $G$ at identity, which for our purposes should be understood through \eqref{eq:expandaroundidentity}. 

\begin{example}[\text{SO}(2) Group Contd]
For a group of rotations in $d$ dimensional space $G = \text{SO}(d)$ the associated Lie algebra $\mathfrak{g}$ is a collection of skew-symmetric matrices. Note that this description is consistent with \eqref{eq:generatorforrotation}, where we found that the Lie algebra is equal to $\left\{\begin{pmatrix}
0 & \theta \\
-\theta & 0
\end{pmatrix} : \theta \in \R \right\}$, which, for this simple case, happens to be isomorphic with $\R$. This simplification is no longer true in higher dimensions and is important in our applications in Section~\ref{sec:datasymmetries}. 
\end{example}

Once we have a generator $\dx \in \mathfrak{g}$, Section~\ref{sec:extensions} prescribes to analyze the set of equations \eqref{equ:extensionconserved}
$$\langle E(t, \x, \xdot, \xddot), \dx(\x) \rangle = 0. $$
Note that for every $\dx$ we get a set of $d$ equations as $\x(t) \in \R^d$ but we get many more equations as \eqref{equ:extensionconserved} can be applied to every element of $\mathfrak{g}$. In our applications it will be crucial to analyze how the whole collection of equations
$$
\{\langle E(t,\x,\xdot, \xddot), \dx(\x) \rangle = 0 \ | \ \dx \in \mathfrak{g} \}
$$
gives rise to \say{conserved} quantities. Intuitively speaking, different generators from $\mathfrak{g}$ can lead to different, linearly independent, equations which, combined together, give more information than any single one. One can think that the number of non-redundant sets of equations is \say{equal} to $\dim(\mathfrak{g})$.

\subsection{Summary and Recipe} \label{sec:recipe}
Let us summarize this section. We give a high level recipe for finding \say{conserved} quantities for systems evolving according to \eqref{equ:extensioneulerlagrange}, i.e., according to

$$E(t, \x, \xdot, \xddot) = - \nabla_{\x} \pot(\x). $$

\begin{algorithm}[H] 
\caption{\label{alg:recipe} \textsc{FindConservedQuantities}($E,\pot$) \Comment $E$ is the left-hand side of \eqref{equ:extensioneulerlagrange}
\newline \text{ }\Comment $\pot$ is the potential from \eqref{equ:extensioneulerlagrange}}
\label{alg:random-walk-tt}
\begin{algorithmic}[1]
	\State Find a Lie group $G$ whose action leaves $\pot$ invariant
	\State Let $\mathfrak{g}$ be the Lie algebra associated with $G$ 
	\State $\text{CQ} :=$ Inferred conserved quantities from a collection of equations $\{\langle E(t,\x,\xdot, \xddot), \dx(\x) \rangle = 0 \ | \ \dx \in \mathfrak{g} \}$
	\State \Return $\text{CQ}$
\end{algorithmic}
\end{algorithm}

\section{Finding Generators with Lie Theory}

As we discussed before it is possible to find generators corresponding to continuous symmetries using Lie theory. For readers familiar with Appendix~\ref{apd:lie} we derive now generators for symmetries considered in the paper.

\subsection{Homogeneity of Activation}\label{apd:liehomogeneity}

The transformation can be thought of as a group $\R^{+}$ (positive real numbers with multiplication) acting on $\R^m$ via matrix multiplication. More concretely, for $g \in \R^{+}$ the associated matrix is $\bar{g} \in \text{Mat}(m)$ defined as a matrix with non-zero entries on the diagonal only and
$$
\bar{g}[j,j] := \begin{cases}
        g & \text{if the $j$-th index corresponds to a weight in $W^{(h)}[i,:]$}\\
        \frac{1}{g} & \text{if the $j$-th index corresponds to a weight in $W^{(h+1)}[:,i]$} \\
        1 &\text{otherwise}
        \end{cases}. 
$$
To find the corresponding generator $\dx_g$ Appendix~\ref{apd:lie} prescribes to express $g$ as
$$
\bar{g} = \lim_{\eta \xrightarrow{} 0} (I + \eta \cdot \dx_g)^{1/\eta},
$$
which corresponds to \eqref{eq:expandaroundidentity}. One can verify that $\dx_g = \log(g) \cdot \dx$, where $\dx$ is defined in \eqref{eq:generator1homog}. This guarantees that the generator corresponding to $g$ is $\log(g) \cdot \dx$. Lie theory guarantees that the Lie algebra corresponding to $\R^+$ is $\R$. This manifests itself in the fact that $\{\log(g) \ | \ g \in \R^+ \} = \R$. Finally, the collection of generators for $\R^+$ can be associated with the corresponding collection of matrices acting on $\R^m$ by multiplication
$$
\{a\dx \ | \ a \in \R\}.
$$

\subsection{Linear Activation}\label{apd:lielinear}

The transformation can be thought of as a group $\text{GL}(d_h)$ (invertible matrices with multiplication) acting on $\R^m$. More concretely for $g \in \text{GL}(d_h)$ the associated linear action is defined as
$$
g \cdot \left(W^{(1)}, \dots, W^{(K)}\right) := \left( W^{(1)}, \dots, W^{(h-1)}, g \cdot W^{(h)}, W^{(h+1)} \cdot g^{-1}, W^{(h+2)}, \dots, W^{(K)} \right).
$$
To find the corresponding generator $\dx_g$ Appendix~\ref{apd:lie} prescribes to express action $g$ as
$$
g = \lim_{\eta \xrightarrow{} 0} (e + \eta \cdot \dx_g)^{1/\eta},
$$
which corresponds to \eqref{eq:expandaroundidentity}. One can verify that $\dx_g = \dx_{\log(g)}$, where $\dx_{\log(g)}$ is defined according to \eqref{eq:generatorlinear} and the $\log$ of  matrix is defined as a matrix whose exponent is equal to argument of the $\log$. In general it might happen that there are many possible logarithms of a given matrix. However Lie theory guarantees that the whole collection $\{\log(g) : g \in \text{GL}(d_h) \}$ can be associated with the Lie algebra $\text{Mat}(d_h)$. Finally the collection of generators for $\text{GL}(d_h)$ can be associated with the corresponding collection of linear transformations of $\R^m$
$$
\{\dx_{a} \ | \ a \in \text{Mat}(d_h)\}.
$$

\subsection{Data Symmetry}\label{apd:lierotational}

The transformation can be thought of as a group $\text{SO}(d_x)$ (orthogonal matrices with determinant $1$) acting on $\R^m$. More concretely for $g \in \text{SO}(d_x)$ the associated linear action is defined as
$$
g \cdot \left(W^{(1)}, \dots, W^{(K)}\right) := \left(W^{(1)} \cdot g, W^{(2)}, \dots, W^{(K)} \right) .
$$
To find the corresponding generator $\dx_g$ Appendix~\ref{apd:lie} prescribes to express action $g$ as
$$
g = \lim_{\eta \xrightarrow{} 0} (e + \eta \cdot \dx_g)^{1/\eta},
$$
which corresponds to \eqref{eq:expandaroundidentity}. One can verify that $\dx_g = \dx_{\log(g)}$, where $\dx_{\log(g)}$ is defined according to \eqref{eq:generatorrotations} and the $\log$ of  matrix is defined as a matrix whose exponent is equal to argument of the $\log$. In general it might happen that there are many possible logarithms of a given matrix. However Lie theory guarantees that the whole collection $\{\log(g) : g \in \text{SO}(d_x) \}$ can be associated with the Lie algebra of skew-symmetric matrices $\mathfrak{so}(d_x)$. Finally the collection of generators for $\text{SO}(d_x)$ can be associated with the corresponding collection of linear transformations of $\R^m$
$$
\{\dx_{P} \ | \ P \in \text{Mat}(d_x), P + P^T = 0\}.
$$

\end{document}